\newtheorem{theorem}{Theorem}[section]
\newtheorem{lemma}{Lemma}[section]
\newtheorem{properties}{Properties}[section]
\theoremstyle{definition}
\newtheorem{definition}{Definition}[section]
\theoremstyle{remark}
\newcommand{\ShortName}{HLB}
\DeclareMathOperator*{\polynomial}{\mathcal{P}}
\title{A Walsh Hadamard Derived Linear Vector Symbolic Architecture}
\author{Mohammad Mahmudul Alam$^{1}$, Alexander Oberle$^{1}$, Edward Raff$^{1,2}$, Stella Biderman$^{2}$,\\\textbf{Tim Oates}$^{1}$, \textbf{James Holt}$^{3}$\\
  $^{1}$University of Maryland, Baltimore County,$^{2}$Booz Allen Hamilton,\\$^{3}$ Laboratory for Physical Sciences\\
  \texttt{m256@umbc.edu}, \texttt{aoberle1@umbc.edu},
  \texttt{Raff\_Edward@bah.com}, \\\texttt{biderman\_stella@bah.com}, \texttt{oates@cs.umbc.edu},  \texttt{holt@lps.umd.edu}
}
\begin{document}

\maketitle

\begin{abstract}
Vector Symbolic Architectures (VSAs) are one approach to developing Neuro-symbolic AI, where two vectors in $\mathbb{R}^d$ are `bound' together to produce a new vector in the same space. VSAs support the commutativity and associativity of this binding operation, along with an inverse operation, allowing one to construct symbolic-style manipulations over real-valued vectors. Most VSAs were developed before deep learning and automatic differentiation became popular and instead focused on efficacy in hand-designed systems. In this work, we introduce the Hadamard-derived linear Binding (HLB), which is designed to have favorable computational efficiency, and efficacy in classic VSA tasks, and perform well in differentiable systems. Code is available at \url{https://github.com/FutureComputing4AI/Hadamard-derived-Linear-Binding}.
\end{abstract}

\section{Introduction}
Vector Symbolic Architectures (VSAs) are a unique approach to performing symbolic style AI. Such methods use a binding operation $\mathcal{B}: \mathbb{R}^d \times \mathbb{R}^d \longrightarrow \mathbb{R}^d$, where $\mathcal{B}(x, y) = z$ denotes that two concepts/vectors $x$ and $y$ are connected to each other. In VSA, any arbitrary concept is assigned to vectors in $\mathbb{R}^d$ (usually randomly). For example, the sentence ``the fat cat and happy dog'' would be represented as $\mathcal{B}(\mathit{fat}, \mathit{cat}) + \mathcal{B}(\mathit{happy}, \mathit{dog}) = \boldsymbol{S}$. One can then ask, ``what was happy'' by \textit{unbinding} the vector for happy, which will return a noisy version of the vector bound to happy. The unbinding operation is denoted $\mathcal{B}^*(x, y)$, and so applying $\mathcal{B}^*(S, \mathit{happy}) \approx \mathit{dog}$. 

Because VSAs are applied over vectors, they offer an attractive platform for neuro-symbolic methods by having natural symbolic AI-style manipulations via differentiable operations. However, current VSA methods have largely been derived for classical AI tasks or cognitive science-inspired work. Many such VSAs have shown issues in numerical stability, computational complexity, or otherwise lower-than-desired performance in the context of a differentiable system. 

As noted in ~\cite{Steinberg2022}, most VSAs can be viewed as a linear operation where $\mathcal{B}(a,b) = a^\top G b$ and $\mathcal{B}^*(a,b) = a^\top F b$, where $G$ and $F$ are $d \times d$ matrices. Hypothetically, these matrices could be learned via gradient descent, but would not necessarily maintain the neuro-symbolic properties of VSAs without additional constraints. Still, the framework is useful as all popular VSAs we are aware fit within this framework. By choosing $G$ and $F$ with specified structure, we can change the computational complexity from $\mathcal{O}(d^2)$, down to $\mathcal{O}(d)$ for a diagonal matrix.  

In this work, we derive a new VSA that has multiple desirable properties for both classical VSA tasks, and in deep-learning applications. Our method will have only $\mathcal{O}(d)$ complexity for the binding step, is numerically stable, and equals or improves upon previous VSAs on multiple recent deep learning applications. Our new VSA is derived from the Walsh Hadamard transform, and so we term our method the Hadamard-derived linear Binding (\ShortName) as it will avoid the $\mathcal{O}(d \log d)$ normally associated with the Hadamard transform, and has better performance than more expensive VSA alternatives. 

Related work to our own will be reviewed in \autoref{sec:related_work}, including our baseline VSAs and their definitions. Our new \ShortName~will be derived in \autoref{sec:method}, showing it theoretically  desirable properties. \autoref{sec:results} will empirically evaluate \ShortName~in classical VSA benchmark tasks, and in two recent deep learning tasks, showing improved performance in each scenario. We then conclude in \autoref{sec:conclusion}.

\section{Related Work} \label{sec:related_work}
Smolensky~\cite{SMOLENSKY1990159} started the VSA approach with the Tensor Product Representation (TPR), where $d$ dimensional vectors (each representing some concept) were bound by computing an outer product. Showing distributivity ($\mathcal{B}(\boldsymbol{x},\boldsymbol{y}+\boldsymbol{z}) =\mathcal{B}(\boldsymbol{x},\boldsymbol{y}) + \mathcal{B}(\boldsymbol{x},\boldsymbol{z})$) and associativity, this allowed specifying logical statements/structures~\cite{Greff2020}. However, for $\rho$ total items to be bound together, it was impractical due to the $\mathcal{O}(d^\rho)$ complexity. 
\cite{Schlegel2020,10.1145/3538531,10.1145/3558000} have surveyed many of the VSAs available today, but our work will focus on three specific alternatives, as outlined in \autoref{tbl:vsas}. The Vector-Derived Transformation Binding (VTB) will be a primary comparison because it is one of the most recently developed VSAs, which has shown improvements in what we will call ``classic'' tasks, where the VSA's symbolic like properties are used to manually construct a series of binding/unbinding operations that accomplish a desired task. Note, that the VTB is unique in it is non-symmetric ($\mathcal{B}(\boldsymbol{x},\boldsymbol{y}) \neq \mathcal{B}(\boldsymbol{y},\boldsymbol{x})$).  Ours, and most others, are symmetric. 

\begin{table}[!h]
\centering
\caption{The binding and initialization mechanisms for our new \ShortName~with baseline methods. \ShortName~is related to the HRR in being derived via a similar approach, but replacing the Fourier transform $\mathcal{F}(\cdot)$ with the Hadamard transform (which simplifies out). The MAP is most similar to our approach in mechanics, but the difference in derived unbinding steps leads to dramatically different performance. The VTB is the most recently developed VSA in modern use. The matrix $V_{\boldsymbol{y}}$ of VTB is a block-diagonal matrix composed from the values of the $\boldsymbol{y}$ vector, which we refer the reader to \cite{GosmannE19} for details. The TorchHD library~\cite{JMLR:v24:23-0300} is used for implementations of prior methods.}
\label{tbl:vsas}
\vspace{5pt}
\renewcommand{\arraystretch}{1.0}
\resizebox{1.0\textwidth}{!}{%
\begin{tabular}{@{}lccc@{}}
\toprule
\textsc{Method} & \textsc{Bind} $\mathcal{B}(x,y)$ & \textsc{Unbind} $\mathcal{B}^{*}(x,y)$ & \textsc{Init} $x$ \\ \midrule
HRR & $\mathcal{F}^{-1}(\mathcal{F}(\boldsymbol{x})\odot \mathcal{F}(\boldsymbol{y}))$ & $\mathcal{F}^{-1}(\mathcal{F}(\boldsymbol{x})\odiv \mathcal{F}(\boldsymbol{y}))$ & $x_i \sim \mathcal{N}(0, 1/d)$ \\
VTB & $V_y x$ & $V_y^\top x$ & \shortstack{$\tilde{x}_i \sim \mathcal{N}(0,1) \rightarrow x = \tilde{\boldsymbol{x}}/\|\tilde{\boldsymbol{x}}\|_2$} \\
MAP-C & $x \odot y$ & $x \odot y$ & $x_i \sim \mathcal{U}(-1,1)$ \\
MAP-B & $x \odot y$ & $x \odot y$ & $x_i \sim \{-1, 1\}$ \\
HLB & $x \odot y$ & $x \odiv y$ & $x_u \sim \{\mathcal{N}(-\mu, 1/d),~\mathcal{N}(\mu, 1/d)\}$ \\ \bottomrule
\end{tabular}%
}
\end{table}

Next is the Holographic Reduced Representation (HRR) \cite{hrr}, which can be defined via the Fourier transform $\mathcal{F}(\cdot)$. One derives the inverse operation of the HRR by defining the one vector $\Vec{1}$ as the identity vector and then solving $\mathcal{F}(\boldsymbol{a}^*)_i \mathcal{F}(\boldsymbol{a})_i = 1$. We will use a similar approach to deriving \ShortName~but replacing the Fourier Transform with the Hadamard transform, making the HRR a key baseline. Last, the Multiply Add Permute (MAP) \cite{cogprints502} is derived by taking only the diagonal of the tensor product from \cite{SMOLENSKY1990159}'s TPR.  This results in a surprisingly simple representation of using element-wise multiplication for both binding/unbinding, making it a key baseline. The MAP binding is also notable for its continuous (MAP-C) and binary (MAP-B) forms, which will help elucidate the importance of the difference in our unbinding step compared to the initialization avoiding values near zero. \ShortName~differs in devising for the unbinding step, and we will later show an additional corrective term that \ShortName~employs for $\rho$ different items bound together, that dramatically improve performance. 

Our motivation for using the Hadamard Transform comes from its parallels to the Fourier Transform (FT) used to derive the HRR and the HRR's relatively high performance. The Hadamard matrix has a simple recursive structure, making analysis tractable, and its transpose is its own inverse, which simplifies the design of the inverse function $\mathcal{B}^*$. Like the FT, WHT can be computed in log-linear time, though in our case, the derivation results in linear complexity as an added benefit. The WHT is already associative and distributive, making less work to obtain the desired properties.  Finally, the WHT involves only $\{-1,1\}$ values, avoiding numerical instability that can occur with the HRR/FT. This work shows that these motivations are well founded, as they result in a binding with comparable or improved performance in our testing.

Our interest in VSAs comes from their utility in both classical symbolic tasks and as useful priors in designing deep learning systems. In classic tasks VSAs are popular for designing power-efficient systems from a finite set of operations \cite{10038683,9921397,Imani2017,Neubert2016}.
HRRs, in particular, have shown biologically plausible models of human cognition \cite{Jones2007,Blouw2013,Stewart2014,Blouw2016} and solving cognitive science tasks \cite{Eliasmith2012}. 
In deep learning the TPR has inspired many prior works in natural language processing \cite{pmlr-v139-schlag21a,huang-etal-2018-tensor,NEURIPS2018_a274315e}. 
To wit, The HRR operation has seen the most use in differentiable systems~\cite{yamaki-etal-2023-holographic,Tay_Tuan_Hui_2018,interacte2020,Liao_Yuan_2019,NEURIPS2023_7c7a1255,pmlr-v187-saul23a,10.5555/3618408.3618431,SubitizingHRR,pmlr-v238-mahmudul-alam24a}. To study our method, we select two recent works that make heavy use of the neuro-symbolic capabilities of HRRs. First, an Extreme Multi-Label (XML) task that uses HRRs to represent an output space of tens to hundreds of thousands of classes $C$ in a smaller dimension $d < C$ ~\cite{ganesan2021learning}, and an information privacy task that uses the HRR binding as a kind of ``encrypt/decrypt'' mechanism for heuristic security~\cite{pmlr-v162-alam22a}. We will explain these methods in more detail in the experimental section.

\section{Methodology} \label{sec:method}

First, we will briefly review the definition of the Hadamard matrix $H$ and its important properties that make it a strong candidate from which to derive a VSA.  With these properties established, we will begin by deriving a VSA we term \ShortName~ where binding and unbinding are the same operation in the same manner as which the original HRR can be derived~\cite{hrr}. Any VSA must introduce noise when vectors are bound together, and we will derive the form of the noise term as $\eta^\circ$. Unsatisfied with the magnitude of this term, we then define a projection step for the Hadamard matrix in a similar spirit to `\cite{ganesan2021learning}'s complex-unit magnitude projection to support the HRR and derive an improved operation with a new and smaller noise term $\eta^\pi$. This will give us the HLB bind/unbind steps as noted in \autoref{tbl:vsas}.

Hadamard $H_d$ is a square matrix of size $d \times d$ of orthogonal rows consisting of only $+1$ and $-1$s given in \autoref{eq:hadamard} where $d = 2^n \; \forall \; n \in \mathbb{N} : n \geq 0$. Bearing in mind that Hadamard or Walsh-Hadamard Transformation (WHT) can be equivalent to discrete multi-dimensional Fourier Transform (FT) when applied to a $d$ dimensional vector \cite{kunz1979equivalence}, it has additional advantages over Discrete Fourier Transform (DFT). Unlike DFT, which operates on complex $\mathbb{C}$ numbers and requires irrational multiplications, WHT only performs calculations on real $\mathbb{R}$ numbers with addition and subtraction operators and does not require any irrational multiplication.
\begin{equation} \label{eq:hadamard} 
H_1 = \begin{bmatrix}
    1
\end{bmatrix} \qquad 
H_2 = \begin{bmatrix}
    1 & 1 \\ 
    1 & -1 
\end{bmatrix} \qquad \cdots \qquad
H_{2^n} = \begin{bmatrix}
    H_{2^{n-1}} & H_{2^{n-1}} \\ 
    H_{2^{n-1}} & - H_{2^{n-1}}
\end{bmatrix}
\end{equation}
Vector symbolic architectures (VSA), for instance, Holographic Reduced Representations (HRR) employs circular convolution to represent compositional structure which is computed using Fast Fourier Transform (FFT) \cite{hrr}. However, it can be numerically unstable due to irrational multiplications of complex numbers. 
Prior work ~\cite{ganesan2021learning} devised a projection step to mitigate the numerical instability of the FFT and it's inverse, but we instead ask if re-deriving the binding/unbinding operations may yield favorable results if we use the favorable properties of the Hadamard transform as given in \autoref{hadamard_properties}. 
\begin{lemma}[Hadamard Properties] \label{hadamard_properties}
Let $H$ be the Hadamard matrix of size $d \times d$ that holds the following properties for $x, y \in \mathbb{R}^d$.
First, $H (H x) = d x$, and second $H (x + y) = Hx + Hy$. 
\end{lemma}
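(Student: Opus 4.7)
The plan is to reduce both claims to standard matrix identities about $H$, and then establish those identities via induction on the recursive construction in \autoref{eq:hadamard}. The second property is the easy half: matrix-vector multiplication $x \mapsto Hx$ is a linear map on $\mathbb{R}^d$, so $H(x+y)=Hx+Hy$ is immediate from distributivity. All the real work is in the first claim $H(Hx) = dx$, which is equivalent to the matrix identity $H^2 = dI_d$.

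To get $H^2 = dI_d$, I would prove two intermediate facts by induction on $n$ (where $d = 2^n$): (i) $H_{2^n}$ is symmetric, i.e., $H_{2^n} = H_{2^n}^\top$, and (ii) $H_{2^n} H_{2^n}^\top = 2^n I_{2^n}$. The base cases $n=0,1$ are direct from the explicit matrices in \autoref{eq:hadamard}. For the inductive step, write $H_{2^n}$ in block form using $H_{2^{n-1}}$ as in \autoref{eq:hadamard}; symmetry of the block form follows from symmetry of $H_{2^{n-1}}$ and the symmetric placement of the $-H_{2^{n-1}}$ block. For (ii), compute the $2\times 2$ block product $H_{2^n} H_{2^n}^\top$: the diagonal blocks come out to $2 H_{2^{n-1}} H_{2^{n-1}}^\top = 2 \cdot 2^{n-1} I = 2^n I$ by the inductive hypothesis, while the off-diagonal blocks cancel because of the sign flip in the lower-right block. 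Combining (i) and (ii) gives $H^2 = H H^\top = d I_d$, hence $H(Hx) = d x$ for every $x \in \mathbb{R}^d$.

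The main obstacle (such as it is) is just keeping the block arithmetic clean in the inductive step for (ii); this is routine but is the only part that is not immediate. An alternative route that avoids induction entirely is to argue directly from the definition that distinct rows of $H$ are orthogonal and each row has squared norm $d$ (since every entry is $\pm 1$), giving $H H^\top = dI$, and then to observe that the recursive block form makes $H$ manifestly symmetric so $H^\top = H$; either path leads to $H^2 = dI$ in a few lines. No additional machinery beyond \autoref{eq:hadamard} and standard linear algebra is required.
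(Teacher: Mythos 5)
Your proof is correct. The paper itself states \autoref{hadamard_properties} without proof, treating both claims as standard facts about the Sylvester-constructed Walsh--Hadamard matrix, so there is no in-paper argument to compare against; your proposal simply supplies the missing justification. The reduction of the second claim to linearity of matrix--vector multiplication is exactly right, and the induction on the block recursion in \autoref{eq:hadamard} correctly yields both $H^\top = H$ and $H H^\top = dI$, hence $H^2 = dI$ and $H(Hx)=dx$. One small point worth keeping explicit: the symmetry step is special to the Sylvester recursion used in \autoref{eq:hadamard} (general Hadamard matrices need not be symmetric), and your inductive argument does correctly rely on that specific construction, so the claim is proved for precisely the matrices the paper uses.
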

The bound composition of two vectors into a single vector space is referred to as \textsc{Binding}. The knowledge retrieval from a bound representation is known as \textsc{Unbinding}. We define the binding function by replacing the Fourier transform in circular convolution with the Hadamard transform given in \autoref{def:binding}. We will denote the binding function four our specific method by $\mathcal{B}$ and the unbinding function by $\mathcal{B}^*$.
\begin{definition}[Binding and Unbinding] \label{def:binding}
The binding of vectors $x, y \in \mathbb{R}^d$ in Hadamard domain is defined in \autoref{eq:binding} where $\odot$ is the elementwise multiplication. The unbinding function is defined in a similar fashion, i.e., $\mathcal{B} = \mathcal{B}^{*}$. In the context of binding, $\mathcal{B}(x, y)$ combines the vectors $x$ and $y$, whereas in the context of unbinding $\mathcal{B}^{*}(x, y)$ refers to the retrieval of the vector associated with $y$ from $x$.
\begin{equation} \label{eq:binding}
\mathcal{B}(x, y) = \frac{1}{d} \cdot H (H x \odot H y) 
\end{equation}
\end{definition}
Now, we will discuss the binding of $\rho$ different representations, which will become important later in our analysis but is discussed here for adjacency to the binding definition. Composite representation in vector symbolic architectures is defined by the summation of the bound vectors. We define a parameter $\rho \in \mathbb{N} : \rho \geq 1$ that denotes the number of vector pairs bundled in a composite representation. Given vectors $x_i, y_i \in \mathbb{R}^d$ and $\forall \; i \in \mathbb{N} : 1 \leq i \leq \rho$, we can define the composite representation $\chi$ as
\begin{equation}
\underset{\rho=1}{\chi} = \mathcal{B}(x_1, y_1) \qquad \underset{\rho=2}{\chi} = \mathcal{B}(x_1, y_1) + \mathcal{B}(x_2, y_2) \qquad \cdots \qquad \chi_\rho = \sum_{i=1}^{\rho} \mathcal{B}(x_i, y_i)
\end{equation}
Next, we require the unbinding operation, which is defined via an inverse function via the following theorem. This will give a symbolic form of our unbinding step that retrieves the original component $\boldsymbol{x}$ being searched for, as well as a necessary noise component $\boldsymbol{\eta^\circ}$, which must exist whenever $\rho \geq 2$ items are bound together without expanding the dimension $d$. 
\begin{theorem}[Inverse Theorem] \label{thm:inverse}
Given the identity function $H x \cdot H x^\dagger = \mathds{1}$ where $x^\dagger$ is the inverse of $x$ in Hadamard domain, then $\mathcal{B}^{*}(\mathcal{B}(x_1, y_1) + \cdots + \mathcal{B}(x_\rho, y_\rho), y_i^\dagger) = \begin{cases}
    x_i & \quad \text{if \quad} \rho = 1 \\ 
    x_i + \eta_i^\circ & \quad \text{else } \rho > 1 
\end{cases}$ where $x_i, y_i \in \mathbb{R}^d$ and $\eta_i^\circ$ is the noise component. 
\end{theorem}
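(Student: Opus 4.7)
The plan is to carry out a direct symbolic computation that peels apart the definition of $\mathcal{B}^{*}=\mathcal{B}$, exploits the two Hadamard properties from \autoref{hadamard_properties}, and then invokes the stated identity $Hy \odot Hy^{\dagger} = \mathds{1}$ on the one ``matching'' summand, sweeping the remaining summands into $\eta_i^{\circ}$.

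First I would treat the $\rho = 1$ case as a warm-up, since the general case is really just this calculation wrapped in a sum. Unfold
\[
\mathcal{B}^{*}(\mathcal{B}(x_1,y_1),y_1^{\dagger}) = \tfrac{1}{d} H\!\left( H\!\left(\tfrac{1}{d} H(Hx_1 \odot Hy_1)\right) \odot Hy_1^{\dagger}\right),
\]
and collapse the inner double Hadamard via $H(Hv) = d\,v$ to get $Hx_1 \odot Hy_1$. Then use associativity/commutativity of $\odot$ to regroup as $Hx_1 \odot (Hy_1 \odot Hy_1^{\dagger})$, apply the identity $Hy_1 \odot Hy_1^{\dagger} = \mathds{1}$, and finally invoke $H(Hx_1) = d\, x_1$ together with the $1/d$ factor out front to recover exactly $x_1$.

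For $\rho > 1$, the second Hadamard property (linearity) lets me push $H$ through the outer sum $\sum_j \mathcal{B}(x_j,y_j)$, and the elementwise distributivity $(a+b)\odot c = a\odot c + b\odot c$ lets me distribute $Hy_i^{\dagger}$ across the resulting sum. The outer $H$ is linear as well, so the whole expression splits into $\rho$ terms of the same form as the $\rho=1$ computation but with possibly mismatched indices. The $j=i$ term collapses to $x_i$ by the argument above; every $j \neq i$ term simplifies only as far as $\tfrac{1}{d} H(Hx_j \odot Hy_j \odot Hy_i^{\dagger})$ because $Hy_j \odot Hy_i^{\dagger}$ need not equal $\mathds{1}$. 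Defining
\[
\eta_i^{\circ} \;=\; \sum_{\substack{j=1 \\ j\neq i}}^{\rho} \tfrac{1}{d}\, H\!\left(Hx_j \odot Hy_j \odot Hy_i^{\dagger}\right)
\]
completes the identity in the theorem.

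There is no real obstacle here; the whole proof is essentially two uses of $H(Hv)=d\,v$, one use of linearity, and one use of the stated inverse identity. The only point that deserves a sentence of justification is the tacit associativity/commutativity of $\odot$ when peeling $Hy_1 \odot Hy_1^{\dagger}$ out of $Hx_1 \odot Hy_1 \odot Hy_1^{\dagger}$, which is immediate from $\odot$ being componentwise multiplication in $\mathbb{R}^d$. The theorem's real content is not the algebra but the fact that, unlike the $\rho=1$ case, the mismatched cross terms do not vanish and must be bookkept as $\eta_i^{\circ}$; this motivates the later analysis of the noise magnitude and the projection step that yields the improved $\eta^{\pi}$.
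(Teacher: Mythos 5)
Your proposal is correct and follows essentially the same route as the paper: expand the definitions, collapse the inner double Hadamard via $H(Hv)=d\,v$, use linearity to split the sum, cancel the matching $j=i$ term with the identity $Hy_i \odot Hy_i^{\dagger}=\mathds{1}$, and collect the mismatched terms into $\eta_i^{\circ}$. Your expression $\eta_i^{\circ}=\sum_{j\neq i}\tfrac{1}{d}H\bigl(Hx_j\odot Hy_j\odot Hy_i^{\dagger}\bigr)$ coincides with the paper's $\tfrac{1}{d}H\bigl(\tfrac{1}{Hy_i}\odot\sum_{j\neq i}Hx_j\odot Hy_j\bigr)$ by linearity of $H$ and the stated identity, so there is no substantive difference.
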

\begin{proof}[Proof of \autoref{thm:inverse}]
We start from the identity function $H x \cdot H x^\dagger = \mathds{1}$ and thus $H x^\dagger = \frac{\mathds{1}}{H x}$. Now using \autoref{eq:binding} we get, 
\begin{equation}
\begin{split}
& \mathcal{B}^{*}(\mathcal{B}(x_1, y_1) + \cdots + \mathcal{B}(x_\rho, y_\rho), y_i^\dagger) = \frac{1}{d} \cdot H ((H x_1 \odot H y_1 + \cdots + H x_\rho \odot H y_\rho) \odot \frac{1}{H y_i}) \\ 
&= \frac{1}{d} \cdot H (H x_i + \frac{1}{H y_i} \odot \sum_{\substack{j=1\\j \neq i}}^{\rho} (H x_j \odot H y_j)) 
= x_i + \frac{1}{d} \cdot H (\frac{1}{H y_i} \odot \sum_{\substack{j=1\\j \neq i}}^{\rho} (H x_j \odot H y_j) ) \quad \autoref{hadamard_properties} \\ 
&= \begin{cases}
    x_i & \quad \text{if \quad} \rho = 1 \\ 
    x_i + \eta_i^\circ & \quad \text{else } \rho > 1 
\end{cases} \qedhere 
\end{split}
\end{equation}
\end{proof}
\vspace{-12pt}
To reduce the noise component and improve the retrieval accuracy, \citep{ganesan2021learning, hrr} proposes a projection step to the input vectors by normalizing them by the absolute value in the Fourier domain. While such identical normalization is not useful in the Hadamard domain since it will only transform the elements to $+1$ and $-1$s, we will define a projection step with only the Hadamard transformation without normalization given in \autoref{def:projection}.
\begin{definition}[Projection] \label{def:projection}
The projection function of $x$ is defined by 
$\pi(x) = \frac{1}{d} \cdot H x $.
\end{definition}
If we apply the \autoref{def:projection} to the inputs in \autoref{thm:inverse} then we get 
\begin{equation}
\begin{split}
\mathcal{B}^{*}(\mathcal{B}(\pi(x_1), \pi(y_1)) + \cdots + \mathcal{B}(\pi(x_\rho), \pi(y_\rho)), \pi(y_i)^\dagger) &= \mathcal{B}^{*}(\frac{1}{d} \cdot H (x_1 \odot y_1 + \cdots x_\rho \odot y_\rho), \frac{1}{y_i}) \\ 
&= \frac{1}{d} \cdot H (\frac{1}{y_i} \odot (x_1 \odot y_1 + \cdots x_\rho \odot y_\rho))
\end{split}
\end{equation}
The retrieved value would be projected onto the Hadamard domain as well and to get back the original data we apply the reverse projection. Since the inverse of the Hadamard matrix is the Hadamard matrix itself, in the reverse projection step we just apply the Hadamard transformation again which derives the output to 
\begin{equation} \label{eq:proj_output}
\begin{split}
& H (\frac{1}{d} \cdot H (\frac{1}{y_i} \odot (x_1 \odot y_1 + \cdots x_\rho \odot y_\rho))) = \frac{1}{y_i} \odot (x_1 \odot y_1 + \cdots + x_\rho \odot y_\rho) \\ 
&= \begin{cases}
    x_i & \quad \text{if \quad} \rho = 1 \\ 
    x_i + \sum\limits_{\substack{j=1,~j \neq i}}^{\rho} \frac{x_j y_j}{y_i} & \quad \text{else } \rho > 1 
\end{cases}
\quad = \begin{cases}
    x_i & \quad \text{if \quad} \rho = 1 \\ 
    x_i + \eta_i^{\pi} & \quad \text{else } \rho > 1 
\end{cases}
\end{split} 
\end{equation}
where $\eta_i^{\pi}$ is the noise component due to the projection step. In expectation, $\eta_i^{\pi} < \eta_i^\circ$ (see \autoref{appendix:noise_decomp}). Thus, the projection step diminishes the accumulated noise. More interestingly, the retrieved output term does not contain any Hadamard matrix. Therefore, we can recast the initial binding definition by multiplying the query vector $y_i$ to the output of \autoref{eq:proj_output} which makes the binding function as the sum of the element-wise product of the vector pairs and the compositional structure a linear time $\mathcal{O}(n)$ representation. Thus, the redefinition of the binding function is $\mathcal{B}'(x, y) = x \odot y$ and $\rho$ bundle of the vector pairs is $\chi'_\rho = \sum_{i=1}^{\rho} (x_i \odot y_i)$. Consequently, the unbinding would be a simple element-wise division of the bound representation by the query vector, i.e, ${\mathcal{B}^{*}}'(x, y) = x \odot \frac{1}{y}$ where $x$ and $y$ are the bound and query vector, respectively. 

\subsection{Initialization of \ShortName}
For the binding and unbinding operations to work, vectors need to have an expected value of zero. However, since we would divide the bound vector with query during unbinding, values close to zero would destabilize the noise component and create numerical instability. Thus, we define a Mixture of $\mathcal{N}$\!ormal Distribution (MiND) with an expected value of zero but an absolute mean greater than zero given in \autoref{eq:init_cdn} where $\mathcal{U}$ is the Uniform distribution. Considering half of the elements are sampled for a normal distribution of mean $-\mu$ and the rest of the half with a mean of $\mu$, the resulting vector has a zero mean with an absolute mean of $\mu$. The properties of the vectors sampled from a MiND distribution are given in \autoref{prop:init_cdn}.
\begin{equation} \label{eq:init_cdn}
\Omega(\mu, 1 / d) = \begin{cases}
\mathcal{N}(-\mu, 1 / d)  & \quad \textrm{if \quad} \mathcal{U}(0, 1) > 0.5 \\
\mathcal{N}(\ \mu, 1 / d) & \quad \text{else } \mathcal{U}(0, 1) \leq 0.5 \\
\end{cases}
\end{equation}
\begin{properties}[Initialization Properties] \label{prop:init_cdn}
Let $x \in \mathbb{R}^d$ sampled from $\Omega(\mu, 1 / d)$ holds the following properties.
$\mathrm{E}[x] = 0, \; \mathrm{E}[|x|] = \mu$, and $\|x\|_2 = \sqrt{\mu^2 d}$
\end{properties}

\subsection{Similarity Augmentation}
In VSAs, it is common to measure the similarity with an extracted embedding $\hat{\boldsymbol{x}}$ with some other vector $\boldsymbol{x}$ using the cosine similarity. For our \ShortName, we devise a correction term when it is known that $\rho$ items have been bound together to extract $\hat{\boldsymbol{x}}$, i.e., $\mathcal{B}^*(\chi_\rho, \boldsymbol{z}) = \hat{\boldsymbol{x}}$. Then if $\hat{\boldsymbol{x}}$ is the noisy version of the true bound term $\boldsymbol{x}$, we want $\operatorname{cossim}(\hat{\boldsymbol{x}},{\boldsymbol{x}}) = 1$, and $\operatorname{cossim}(\hat{\boldsymbol{x}},{\boldsymbol{y}}) = 0, \forall \boldsymbol{y} \neq \boldsymbol{x}$. We achieve this by instead computing $\operatorname{cossim}(\hat{\boldsymbol{x}},{\boldsymbol{x}}) \cdot \sqrt{\rho}$, and the derivation of this corrective term is given by  \autoref{thm:cosine}.
\begin{theorem}[$\phi$ -- $\rho$ Relationship] \label{thm:cosine}
Given $x_i, y_i \sim \Omega(\mu, 1 / d) \; \forall \; i \in \mathbb{N} : 1 \leq i \leq \rho$, the cosine similarity $\phi$ between the original $x_i$ and retrieved vector $\hat{x_i}$ is approximately equal to the inverse square root of the number of vector pairs in a composite representation $\rho$ given 
by $\phi \approx \frac{1}{\sqrt{\rho}}$. 
\end{theorem}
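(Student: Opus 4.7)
The plan is to expand the cosine similarity directly using the noise decomposition derived in \autoref{eq:proj_output}. Write $\hat{x}_i = x_i + \eta_i^\pi$ with $(\eta_i^\pi)_k = \sum_{j \neq i} x_{j,k}\, y_{j,k}/y_{i,k}$, so
\[
\phi \;=\; \frac{\langle x_i,\,\hat{x}_i\rangle}{\|x_i\|_2\,\|\hat{x}_i\|_2} \;=\; \frac{\|x_i\|_2^{\,2} + \langle x_i,\,\eta_i^\pi\rangle}{\|x_i\|_2\,\|\hat{x}_i\|_2}.
\]
By \autoref{prop:init_cdn}, $\|x_i\|_2^2 \approx \mu^2 d$. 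The cross term $\langle x_i,\eta_i^\pi\rangle$ is mean zero: $x_i$ is independent of every $x_j, y_j$ with $j\neq i$ and has zero mean coordinatewise, so this term is a lower-order fluctuation that I would drop from the leading-order estimate. This reduces the task to evaluating $\mathbb{E}\|\eta_i^\pi\|_2^2$, after which $\phi$ follows from $\|\hat{x}_i\|_2^2 = \|x_i\|_2^2 + 2\langle x_i,\eta_i^\pi\rangle + \|\eta_i^\pi\|_2^2$.

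Next I would compute $\mathbb{E}[(\eta_i^\pi)_k^2]$ coordinatewise. Conditioning on $y_i$ and using independence of $\{x_j,y_j\}_{j\neq i}$ together with $\mathbb{E}[x_{j,k}] = \mathbb{E}[y_{j,k}] = 0$, every off-diagonal cross term $j\neq \ell$ in $\bigl(\sum_{j\neq i} x_{j,k} y_{j,k}\bigr)^2$ vanishes, leaving
\[
\mathbb{E}\bigl[(\eta_i^\pi)_k^2\bigr] \;=\; (\rho-1)\,\mathbb{E}[x_{j,k}^2]\,\mathbb{E}[y_{j,k}^2]\,\mathbb{E}\bigl[1/y_{i,k}^2\bigr].
\]
For MiND, each factor $\mathbb{E}[x_{j,k}^2] = \mathbb{E}[y_{j,k}^2] = \mu^2 + 1/d$, and because $y_{i,k}$ concentrates at $\pm\mu$ with variance $1/d$, $\mathbb{E}[1/y_{i,k}^2] \approx 1/\mu^2$ for large $d$. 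Each coordinate therefore contributes $\approx (\rho-1)\mu^2$, and summing over $k$ yields $\|\eta_i^\pi\|_2^2 \approx (\rho-1)\mu^2 d$.

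Assembling, $\|\hat{x}_i\|_2^2 \approx \mu^2 d + (\rho-1)\mu^2 d = \rho\mu^2 d$, so
\[
\phi \;\approx\; \frac{\mu^2 d}{\sqrt{\mu^2 d}\cdot \sqrt{\rho\mu^2 d}} \;=\; \frac{1}{\sqrt{\rho}},
\]
which is the claimed identity and directly justifies the $\sqrt{\rho}$ correction used in the similarity augmentation.

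The main obstacle is the handling of $\mathbb{E}[1/y_{i,k}^2]$: if $y_{i,k}$ is taken to be an exact Gaussian with mean $\pm\mu$ and variance $1/d$, this expectation is formally divergent because of the exponentially small density at zero. The argument therefore really depends on a concentration/truncation step — on the event $\{|y_{i,k}| \geq \mu/2\}$, which by a standard Gaussian tail bound holds with probability $1 - \exp(-\Omega(\mu^2 d))$, one has $1/y_{i,k}^2 \leq 4/\mu^2$, and this is precisely the regime the MiND initialization in \autoref{eq:init_cdn} was engineered to enforce. Once this truncation is in place, the rest of the derivation is routine bookkeeping on second moments, and the ``$\approx$'' in the statement is entirely absorbed into the large-$d$ concentration.
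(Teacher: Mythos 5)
Your proposal is correct and follows essentially the same route as the paper's proof: expand $\hat{x}_i = x_i + \eta_i^\pi$ inside the cosine similarity, use the MiND moments to get $\|x_i\|_2^2 \approx \mu^2 d$ and $\|\eta_i^\pi\|_2^2 \approx (\rho-1)\mu^2 d$, and drop the mean-zero cross terms ($\alpha$, $\beta$ in the paper) to obtain $\phi \approx 1/\sqrt{\rho}$. Your explicit second-moment computation with the truncation argument for $\mathbb{E}[1/y_{i,k}^2]$ is a more careful treatment of a step the paper simply asserts (taking each noise coordinate to have magnitude $\mu$, hence $\|x_j y_j / y_i\|_2 = \sqrt{\mu^2 d}$), but it does not change the underlying argument.
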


\begin{proof}[Proof of \autoref{thm:cosine}]
We start with the definition of cosine similarity and insert the value of $\hat{x_i}$. The step-by-step breakdown is shown in \autoref{eq:cosine_steps1}.
\begin{equation} \label{eq:cosine_steps1}
\begin{split} 
\phi = \frac{\sum\limits^{d}{x_i \cdot \hat{x_i}}}{\|x_i\|_2 \cdot \|\hat{x_i}\|_2}
     = \frac{\sum\limits^{d}{x_i \cdot \left( x_i + \sum\limits_{\substack{j=1,~j \neq i}}^{\rho} \frac{x_j y_j}{y_i} \right)}}{\|x_i\|_2 \cdot \|x_i + \sum\limits_{\substack{j=1,~j \neq i}}^{\rho} \frac{x_j y_j}{y_i}\|_2} 
     = \frac{\sum\limits^{d}{x_i \cdot x_i} + \sum\limits^{d}{ x_i \cdot \left( \sum\limits_{\substack{j=1,~j \neq i}}^{\rho} \frac{x_j y_j}{y_i} \right) }}{\|x_i\|_2 \cdot \|x_i + \sum\limits_{\substack{j=1,~j \neq i}}^{\rho} \frac{x_j y_j}{y_i}\|_2}
\end{split}
\end{equation}
Employing \autoref{prop:init_cdn} we can derive that $\|x_i\|_2 = \sqrt{\sum{x_i \cdot x_i}} = \sqrt{\mu^2 d}$ and $\|\frac{x_j y_j}{x_i}\| = \sqrt{\mu^2 d}$. Thus, the square of the $\|x_i + \sum\limits_{\substack{j=1,~j \neq i}}^{\rho} \frac{x_j y_j}{y_i}\|_2$ can be expressed as
\begin{equation} \label{eq:cosine_steps2}
\begin{split}
&= \left\Vert x_i \right\Vert_2^{2} + \sum\limits_{\substack{j=1,~j \neq i}}^{\rho} \left\Vert \frac{x_j y_j}{y_i} \right\Vert_2^{2} \; + \; 
2 \cdot \underbrace{\sum\limits^{d}{x_i \left( \sum\limits_{\substack{j=1,~j \neq i}}^{\rho} \frac{x_j y_j}{y_i} \right) } }_{\alpha} + 
\underbrace{\sum\limits^{d} \sum_{\substack{j=1\\j \neq i}}^{\rho - 1} \sum_{\substack{l=1\\l \neq j}}^{\rho - 1} \frac{x_j y_j}{y_i} \cdot \frac{x_l y_l}{y_i}}_{\beta} \\ 
&= \mu^2 d + (\rho - 1) \cdot \mu^2 d + 2 \alpha + 2 \beta \quad 
= \rho \cdot \mu^2 d + 2 \alpha + 2 \beta 
\end{split}
\end{equation}
Therefore, using \autoref{eq:cosine_steps1} and \autoref{eq:cosine_steps2} we can write that 
\begin{equation}
\begin{split}
\mathrm{E}[\phi] = \frac{\mu^2 d + \alpha}{\sqrt{\mu^2 d} \cdot \sqrt{\rho \cdot \mu^2 d + 2 \alpha + 2 \beta}}
\approx\footnotemark \frac{\mu^2 d}{\sqrt{\mu^2 d} \cdot \sqrt{\rho \cdot \mu^2 d}} 
= \frac{\mu^2 d}{\sqrt{\rho} \cdot \mu^2 d}
= \frac{1}{\sqrt{\rho}} \qedhere 
\end{split}
\end{equation}
\end{proof}
\footnotetext{Here, $\alpha$ and $\beta$ are the noise terms and in expectation $\mathrm{E}[\alpha] \approx 0$ and $\mathrm{E}[\beta] \approx 0$.}
The experimental result of the $\phi - \rho$ relationship closely follows the theoretical expectation provided in \autoref{appendix:cosine} which also indicates that the approximation is valid. Since, we know from \autoref{thm:cosine} that similarity score $\phi$ drops by the inverse square root of the number of vector pairs in a composite representation $\rho$, in places where $\rho$ is known or can be estimated from $\left\Vert \chi_\rho \right\Vert_2 \approx \mu^2 \sqrt{\rho \cdot d}$ (proof in \autoref{appendix:norm}), it can be used to update the cosine similarity multiplying the scores by $\sqrt{\rho}$. \autoref{eq:cosine_update} shows the updated similarity score where in a positive case $(+)$, $\phi$ would be close to $1/\sqrt{\rho}$ and in a negative case $(-)$, $\phi$ would be close to zero.
\begin{equation} \label{eq:cosine_update}
\begin{split}
\phi'       = \phi \times \sqrt{\rho} \qquad
\phi'_{(+)} = \phi_{\rightarrow \frac{1}{\sqrt{\rho}}} \times \sqrt{\rho} \; \approx 1 \qquad
\phi'_{(-)} = \phi_{\rightarrow 0} \times \sqrt{\rho}                     \;\;\; \approx 0
\end{split}
\end{equation}
\begin{figure}[!h] 
\centerline{\includegraphics[width=\textwidth]{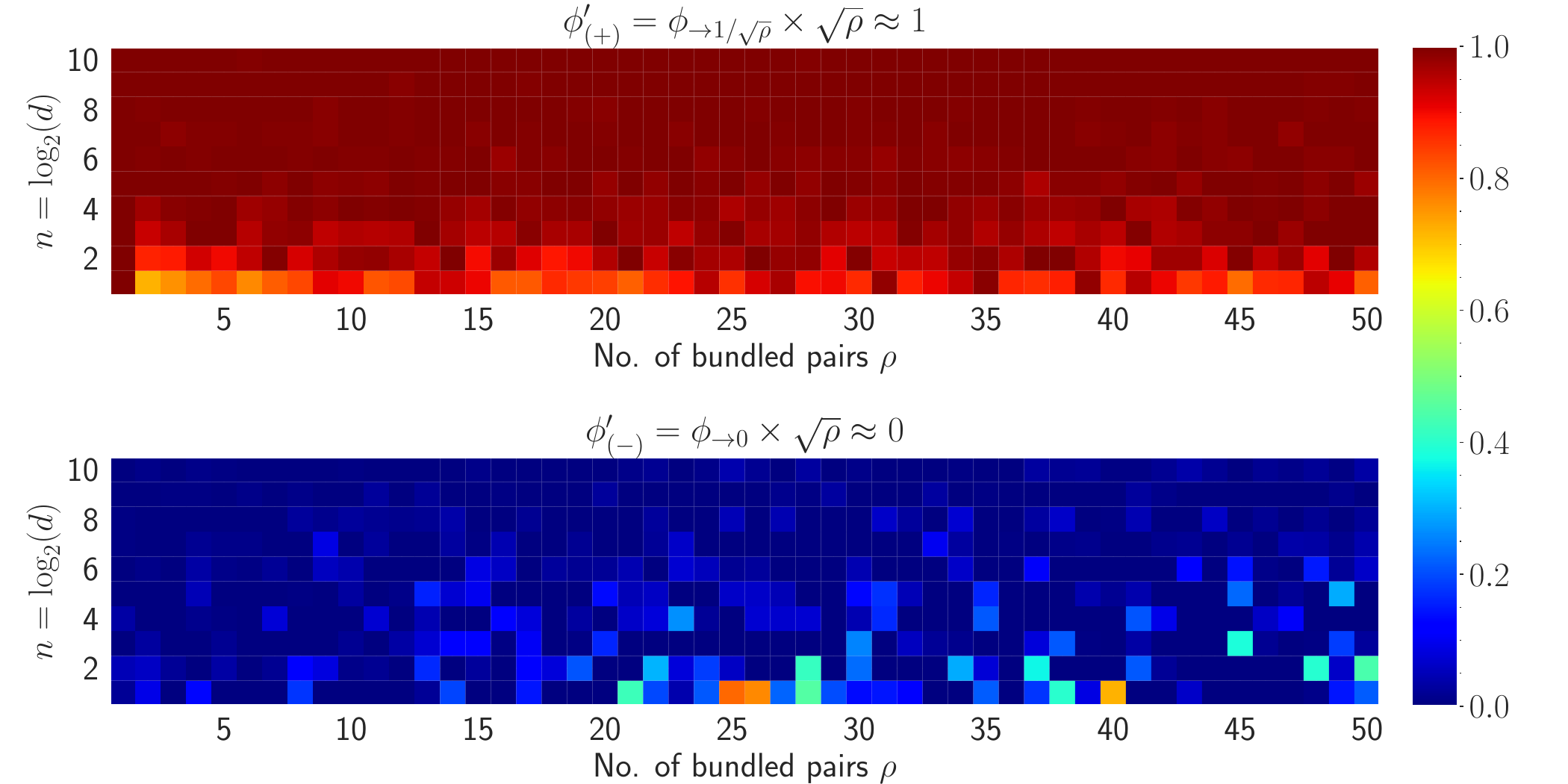}}
\caption{Empirical comparison of the corrected cosine similarity scores between $\phi'_{(+)}$ (on top) and $\phi'_{(-)}$ (on bottom) for varying $n$ and $\rho$ shown in heatmap. The dimension, i.e., $d = 2^n$ is varied from $2$ to $1024$ $(n \in \{1, 2, \cdots, 10\})$ and the number of vector pairs bundled is varied from $1$ to $50$. This shows that we can accurately identify when a vector $\boldsymbol{x}$ has been bound to a VSA or not when we keep track of how many pairs of terms $\rho$ are included.}
\label{fig:correction}
\end{figure}

Empirical results of $\phi'$ for varying $n$ and $\rho$ are visualized and verified by a heatmap. In a composite representation $\chi'_\rho = \sum_{i=1}^{\rho} (x_i \odot y_i)$, when unbinding is applied using the query $y_i$, i.e., ${\mathcal{B}^{*}}'(\chi'_\rho, y_i) = \hat{x_i}$, a positive case is a similarity between $x_i$ and $\hat{x_i}$. On the contrary, similarity between $\hat{x_i}$ and any $x_j$ where $j \in \{1, 2, \cdots, \rho\}$ and $j \neq i$, is a negative case. Mean cosine similarity scores of $100$ trials for both positive and negative cases in presented in \autoref{fig:correction} where the scores for the positive cases are in the \textcolor{red}{red $(\approx 1)$} shades and the scores for the negative cases are in the \textcolor{blue}{blue $(\approx 0)$} shades.

\section{Empirical Results} \label{sec:results}

\subsection{Classical VSA Tasks}
A common VSA task is, given a bundle (addition) of $\rho$ pairs of bound vectors $\boldsymbol{s} = \sum_{i=1}^\rho \mathcal{B}(\boldsymbol{x}_i, \boldsymbol{y}_i)$, 
given a query $\boldsymbol{x}_q \in \boldsymbol{s}$, c
can the corresponding 
vector $\boldsymbol{y}_q$
be correctly retrieved from the bundle. To test this, we perform an experiment similar to one in \cite{Schlegel2022}. We first create a pool $P$ of $N=1000$ random vectors, then sample (with replacement) $p$ pairs of vectors for $p \in \{1, 2, \cdots, 25\}$. 
The pairs are bound together and added to create a composite representation $\boldsymbol{s}$.
Then, we iterate through all 
left pairs $\boldsymbol{x}_q$
in the composite representation and attempt to retrieve the corresponding $\boldsymbol{y}_q, \forall q \in [1, p]$. 
A retrieval is considered correct if $\mathcal{B}^*(\boldsymbol{s}, \boldsymbol{x}_q)^\top \boldsymbol{y}_q > \mathcal{B}^*(\boldsymbol{s}, \boldsymbol{x}_q)^\top \boldsymbol{y}_j , \forall j \neq q$. 
The total accuracy score for the bundle is recorded, and the experiment is repeated for $50$ trials. Experiments are performed to compare HRR \cite{hrr}, VTB \cite{GosmannE19}, MAP \cite{cogprints502}, and our \ShortName~VSAs. For each VSA, at each dimension of the vector, the area under the curve (AUC) of the accuracy vs. the no. of bound terms plot is computed, and the results are shown in \autoref{fig:accuracy}. In general, \ShortName~has comparable performance to HRR and VTB, and performs better than MAP.

\begin{figure}[!h] 
\centerline{\includegraphics[width=0.93\textwidth]{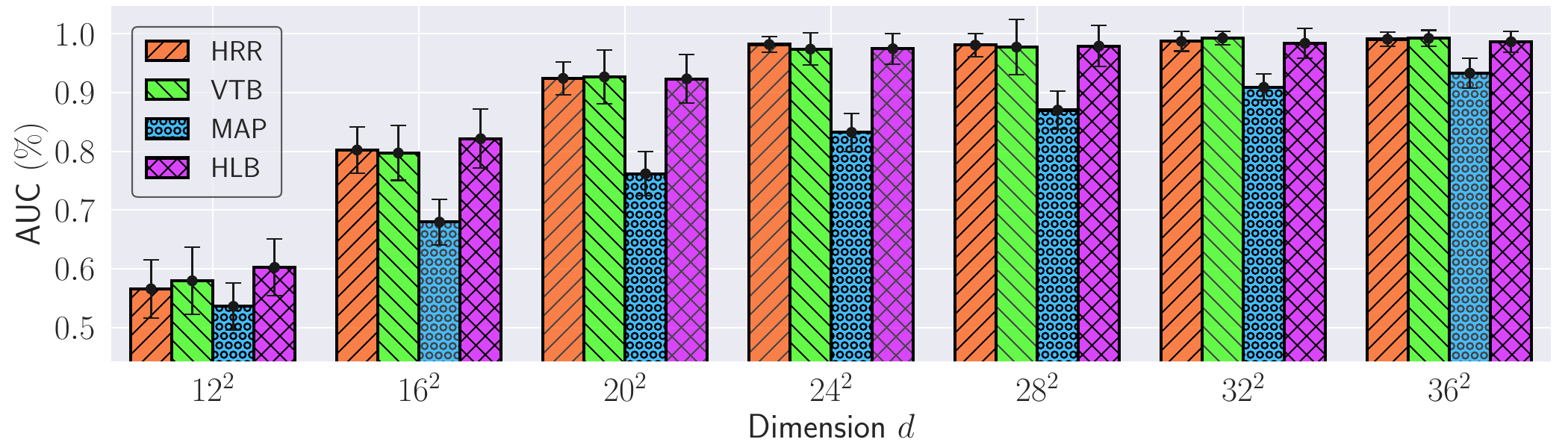}}
\caption{The area under the accuracy curve due to the change of no. of bundled pairs $\rho$ for dimensions $d$. All the dimensions are chosen to be perfect squares due to the constraint of VTB.}
\label{fig:accuracy}
\end{figure}

The scenario we just considered looked at bindings of only two items together, summed of many pairs of bindings. \cite{GosmannE19} proposed addition evaluations over sequential bindings that we now consider. In the \textit{random} case we have an initial vector $\boldsymbol{b}_0$, and for $p$ rounds, we will modify it by a random vector  $\boldsymbol{x}_t$ such that $\boldsymbol{b}_{t+1} = \mathcal{B}(\boldsymbol{b}_t, \boldsymbol{x}_t)$, after which we unbind each $\boldsymbol{x}_t$ to see how well the previous $\boldsymbol{b}_t$ is recovered. In the \textit{auto binding} case, we use a single random vector $\boldsymbol{x}$ for all $p$ rounds.

In this task, we are concerned with the quality of the similarity score in random/auto-binding, as we want $\mathcal{B}^*(\boldsymbol{b}_{t+1}, \boldsymbol{x}_t)^\top \boldsymbol{b}_t = 1$. For VSAs with approximate unbinding procedures, such as HRR, VTB, and MAP-C, the returned value will be 1 if $p=1$ but will decay as $p$ increases. \ShortName uses an exact unbinding procedure so that the returned value is expected to be 1 $\forall \ p$. We are also interested in the magnitude of the vectors $\|\mathcal{B}^*(\boldsymbol{b}_{t+1}, \boldsymbol{x}_t)\|_2$, where an ideal VSA has a constant magnitude that does not explode/vanish as $p$ increases. 

\begin{figure}[!h] 
\centerline{\includegraphics[width=0.93\textwidth]{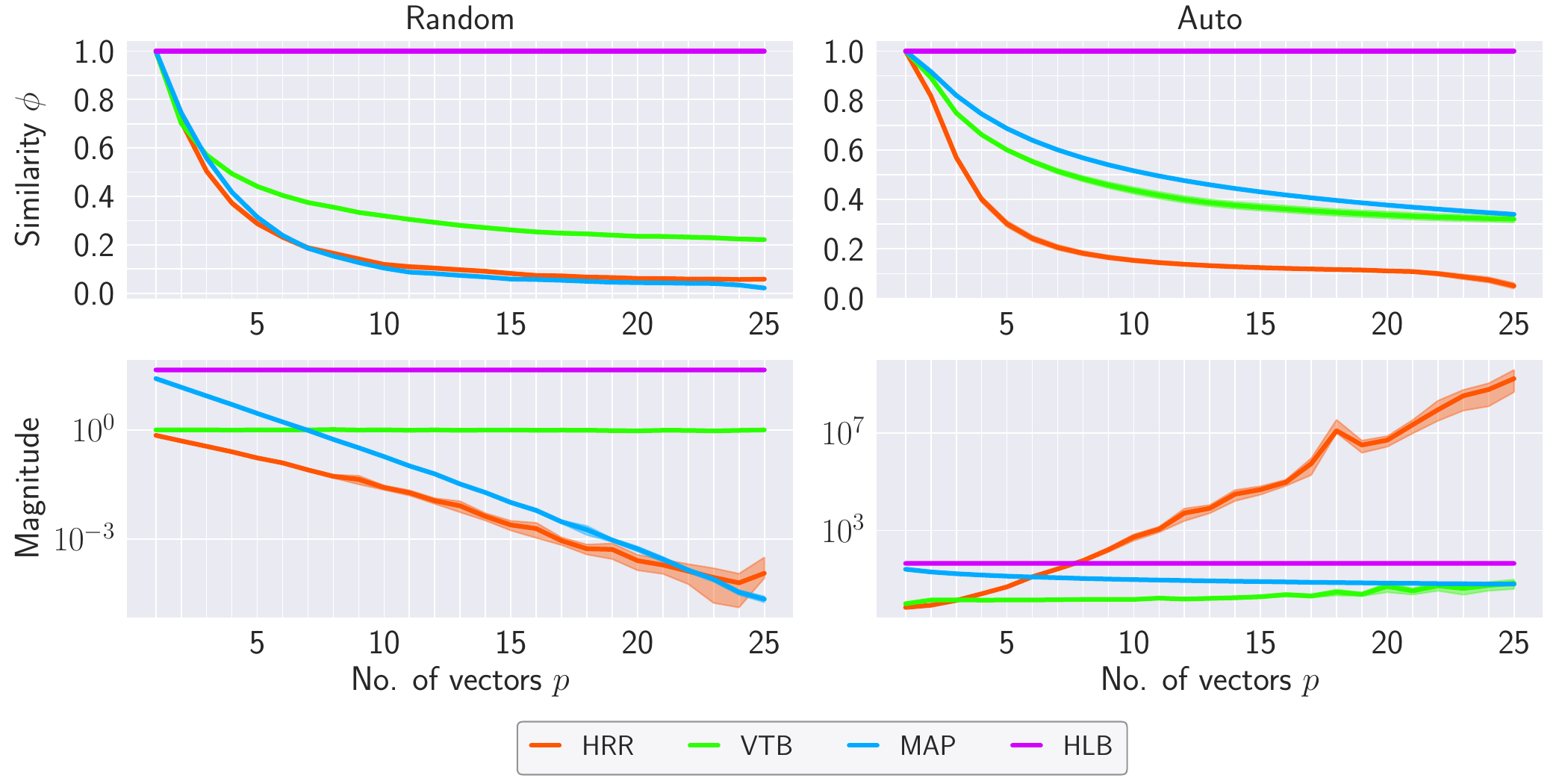}}
\caption{
When repeatedly binding different random (left) or a single vector (right), \ShortName~consistently returns the ideal similarity score of 1 for a present item (top row) and has a constant magnitude (bottom row), avoiding exploding/vanishing values. 
}
\label{fig:similarity}
\end{figure}

\autoref{fig:similarity} shows that \ShortName~maintains a stable magnitude regardless of the number of bound vectors in both cases. This property arises due to the properties of the distribution shown in \autoref{prop:init_cdn}. As all components have an expected absolute value of 1, the product of all components also has an expected absolute value of 1. Thus, the norm of the binding is simply $\sqrt{d}$. It also shows \ShortName~maintains the desired similarity score as $p$ increases. Combined with \autoref{fig:correction} that shows the scores are near-zero when an item is not present, \ShortName~has significant advantages in consistency for designing VSA solutions.

\subsection{Deep Learning with Hadamard-derived Linear Binding}

Two recent methods that integrate HRR with deep learning are tested to further validate our approach and briefly summarized in the two sub-sections below. In each case, we run all four VSAs and see that \ShortName~either matches or exceeds the performance of other VSAs. In every experiment, the standard method of sampling vectors from each VSA is followed as outlined in \autoref{tbl:vsas}. All the experiments are performed on a single NVIDIA TESLA PH402 GPU with $32$GB memory.

\subsubsection{Connectionist Symbolic Pseudo Secrets}

When running on low-power computing environments, it is often desirable to offload the computation to a third-party cloud environment to get the answer faster and use fewer local resources. However, this may be problematic if one does not fully trust the available cloud environments. Homomorphic encryption (HE) is the ideal means to alleviate this problem, providing cryptography for computations. HE is currently more expensive to perform than running a neural network itself~\cite{gilad2016cryptonets}, defeating its own utility in this scenario. Connectionist Symbolic Pseudo Secrets (CSPS)~\cite{pmlr-v162-alam22a} provides a heuristic means of obscuring the nature of the input (content), and output (number of classes/prediction),  while also reducing the total local compute required. 

CSPS mimics a ``one-time-pad'' by taking a random VSA vector $\boldsymbol{s}$ as the \textit{secret} and binding it to the input $\boldsymbol{x}$. The value $\mathcal{B}(\boldsymbol{s},\boldsymbol{x})$ obscures the original $\boldsymbol{x}$, and the third-party runs the bulk of the network on their platform. A result $\boldsymbol{\Tilde{y}}$ is returned, and a small local network computes the final answer after unbinding with the secret $\mathcal{B}^*(\boldsymbol{\Tilde{y}}, \boldsymbol{s})$. Other than changing the VSA used, we follow the same training, testing, architecture size, and validation procedure of~\cite{pmlr-v162-alam22a}.
\par 
CSPS experimented with 5 datasets MNIST, SVHN, CIFAR-10 (CR10), CIFAR-100 (CR100), and Mini-ImageNet (MIN). 
First, we look at the accuracy of each method, which is lower due to the noise of the random vector $\boldsymbol{s}$ added at test time since no secret VSA is ever reused. The results are shown in \autoref{tab:csps}, where \ShortName~outperforms all prior methods significantly. Notably, the MAP VSA is second best despite being one of the older VSAs, indicating its similarity to HLB in using a simple binding procedure, and thus simple gradient may be an important factor in this scenario.

\begin{table}[!h]
\centering
\caption{Accuracy comparison of the proposed \ShortName~with HRR, VTB, MAP-C, and MAP-B in CSPS. The dimensions of the inputs along with the no. of classes are listed in the Dims/Labels column. The last row shows the geometric mean of the results.}
\vspace{3pt}
\label{tab:csps}
\renewcommand{\arraystretch}{1.0}
\resizebox{1.0\textwidth}{!}{%
\begin{tabular}{@{}lccccccccccc@{}}
\toprule
\multirow{2}{*}{\textsc{Dataset}} & \multirow{2}{*}{\textsc{\shortstack{Dims/\\Labels}}} & \multicolumn{2}{c}{\textsc{CSPS + HRR}} & \multicolumn{2}{c}{CSPS + VTB} & \multicolumn{2}{c}{CSPS + MAP-C} & \multicolumn{2}{c}{CSPS + MAP-B} & \multicolumn{2}{c}{CSPS + \ShortName} \\ \cmidrule(lr){3-4} \cmidrule(lr){5-6} \cmidrule(lr){7-8} \cmidrule(lr){9-10} \cmidrule(lr){11-12}
 &  & \texttt{Top@1} & \texttt{Top@5} & \texttt{Top@1} & \texttt{Top@5} & \texttt{Top@1} & \texttt{Top@5} & \texttt{Top@1} & \texttt{Top@5} & \texttt{Top@1} & \texttt{Top@5} \\ \midrule
\textsc{MNIST} & $28^2/10$ & $98.51$ & -- & $98.44$ & -- & $98.46$ & -- & $98.40$ & -- & $\mathbf{98.73}$ & -- \\
\textsc{SVHN}  & $32^2/10$ & $88.44$ & -- & $19.59$ & -- & $79.95$ & -- & $92.43$ & -- &  $\mathbf{94.53}$ & --  \\
\textsc{CR10}  & $32^2/10$ & $78.21$ & -- & $74.22$ & -- & $76.69$ & -- & $82.83$ & -- & $\mathbf{83.81}$ & --  \\
\textsc{CR100} & $32^2/100$ & $48.84$ & $75.82$ & $35.87$ & $61.79$ & $56.77$ & $81.52$ & $57.76$ & $84.63$ &  $\mathbf{58.82}$ & $\mathbf{87.50}$ \\
\textsc{MIN} & $84^2/100$ & $40.99$ & $66.99$ & $45.81$ & $73.52$ & $52.22$ & $78.63$ & $57.91$ & $82.81$ & $\mathbf{59.48}$ & $\mathbf{83.35}$ \\ \toprule
\textsc{GM} &  & $67.14$ & $71.26$ & $47.24$ & $67.40$ & $70.89$ & $80.06$ & $75.90$ & $83.72$ & $\mathbf{77.17}$ & $\mathbf{85.40}$ \\ \bottomrule
\end{tabular}%
}
\end{table}

\begin{table}[!h]
\centering
\caption{
Clustering results of the main network inputs (top rows) and outputs (bottom rows) in terms of Adjusted Rand Index (ARI). Because CSPS is trying to hide information, scores near zero are better. Cell color corresponds to the cell absolute value, with \textcolor{blue}{blue} indicating lower ARI and \textcolor{red}{red} indicating higher ARI. All numbers in percentages, and show \ShortName~is better at information hiding. 
}
\label{tab:clustering}
\renewcommand{\arraystretch}{1.0}
\resizebox{1.0\textwidth}{!}{%
\begin{tabular}{@{}lrrrrrrrrrrr@{}}
\cmidrule[\heavyrulewidth](l){1-11}
\multicolumn{1}{l}{\multirow{2}{*}{\textsc{\shortstack{Clustering\\Methods}}}} & \multicolumn{5}{c}{HRR} & \multicolumn{5}{c}{VTB} & \\ \cmidrule(l){2-6} \cmidrule(l){7-11} 
\multicolumn{1}{l}{} & \textsc{MNIST} & \textsc{SVHN} & \textsc{CR10} & \textsc{CR100} & \textsc{MIN} & 
\textsc{MNIST}    & \textsc{SVHN} & \textsc{CR10} & \textsc{CR100} & \textsc{MIN} & \\ \cmidrule[\heavyrulewidth](l){1-11}

\textsc{\;\;K-Means} & \cellcolor[rgb]{0.6254901960784314,0.6254901960784314,1.0}$-0.02$ & \cellcolor[rgb]{0.6254901960784314,0.6254901960784314,1.0}$-0.01$ & \cellcolor[rgb]{0.75,0.8049019607843138,1.0}$0.18$ & \cellcolor[rgb]{0.9039215686274511,1.0,0.8382352941176472}$0.54$ & \cellcolor[rgb]{0.8058823529411766,0.9999999999999999,0.9362745098039216}$0.42$ & \cellcolor[rgb]{0.6254901960784314,0.6254901960784314,1.0}$-0.00$ & \cellcolor[rgb]{0.6254901960784314,0.6254901960784314,1.0}$-0.01$ & \cellcolor[rgb]{0.6254901960784314,0.6254901960784314,1.0}$-0.01$ & \cellcolor[rgb]{0.6470588235294117,0.6470588235294117,1.0}$0.02$ & \cellcolor[rgb]{0.6254901960784314,0.6254901960784314,1.0}$0.00$ & \multirow{20}{*}{
\begin{minipage}{0.04\textwidth}
\centerline{\includegraphics[height=7.9cm,keepaspectratio]{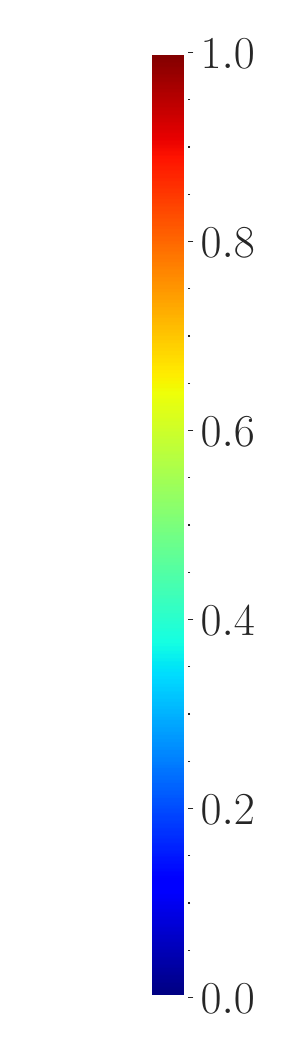}}
\end{minipage}
} \\
\textsc{\;\;GMM} & \cellcolor[rgb]{0.634313725490196,0.634313725490196,1.0}$0.01$ & \cellcolor[rgb]{0.6254901960784314,0.6254901960784314,1.0}$0.00$ & \cellcolor[rgb]{0.7274509803921569,0.7274509803921569,1.0}$0.09$ & \cellcolor[rgb]{0.9607843137254902,1.0,0.780392156862745}$0.61$ & \cellcolor[rgb]{0.8215686274509805,0.9999999999999999,0.9196078431372549}$0.44$ & \cellcolor[rgb]{1.0,0.6254901960784314,0.6254901960784314}$4.67$ & \cellcolor[rgb]{1.0,0.6254901960784314,0.6254901960784314}$1.37$ & \cellcolor[rgb]{0.6254901960784314,0.6254901960784314,1.0}$-0.01$ & \cellcolor[rgb]{0.6470588235294117,0.6470588235294117,1.0}$0.02$ & \cellcolor[rgb]{0.634313725490196,0.634313725490196,1.0}$0.01$ & \\
\textsc{\;\;Birch} & \cellcolor[rgb]{0.75,0.8245098039215686,1.0}$0.20$ & \cellcolor[rgb]{0.6254901960784314,0.6254901960784314,1.0}$0.00$ & \cellcolor[rgb]{0.75,0.7617647058823529,1.0}$0.14$ & \cellcolor[rgb]{0.8313725490196079,0.9999999999999999,0.9107843137254902}$0.45$ & \cellcolor[rgb]{0.7431372549019608,0.9792852624920934,1.0}$0.35$ & \cellcolor[rgb]{0.6470588235294117,0.6470588235294117,1.0}$0.02$ & \cellcolor[rgb]{0.6558823529411764,0.6558823529411764,1.0}$0.03$ & \cellcolor[rgb]{0.669607843137255,0.669607843137255,1.0}$0.04$ & \cellcolor[rgb]{0.7137254901960786,0.7137254901960786,1.0}$0.08$ & \cellcolor[rgb]{0.6558823529411764,0.6558823529411764,1.0}$0.03$ & \\
\textsc{\;\;HDBSCAN} & \cellcolor[rgb]{0.6254901960784314,0.6254901960784314,1.0}$0.00$ & \cellcolor[rgb]{0.6254901960784314,0.6254901960784314,1.0}$-0.24$ & \cellcolor[rgb]{1.0,0.6254901960784314,0.6254901960784314}$1.23$ & \cellcolor[rgb]{0.634313725490196,0.634313725490196,1.0}$0.01$ & \cellcolor[rgb]{0.6470588235294117,0.6470588235294117,1.0}$0.02$ & \cellcolor[rgb]{0.6254901960784314,0.6254901960784314,1.0}$0.00$ & \cellcolor[rgb]{0.6254901960784314,0.6254901960784314,1.0}$0.00$ & \cellcolor[rgb]{0.6254901960784314,0.6254901960784314,1.0}$0.00$ & \cellcolor[rgb]{0.6254901960784314,0.6254901960784314,1.0}$0.00$ & \cellcolor[rgb]{0.6254901960784314,0.6254901960784314,1.0}$0.00$ & \\ \cmidrule[\heavyrulewidth](l){1-11}

\textsc{\;\;K-Means} & \cellcolor[rgb]{1.0,0.6254901960784314,0.6254901960784314}$1.28$ & \cellcolor[rgb]{0.692156862745098,0.692156862745098,1.0}$0.06$ & \cellcolor[rgb]{0.75,0.8323529411764705,1.0}$0.21$ & \cellcolor[rgb]{0.6558823529411764,0.6558823529411764,1.0}$0.03$ & \cellcolor[rgb]{0.7137254901960786,0.7137254901960786,1.0}$0.08$ & \cellcolor[rgb]{1.0,0.6254901960784314,0.6254901960784314}$8.52$ & \cellcolor[rgb]{0.75,0.753921568627451,1.0}$0.13$ & \cellcolor[rgb]{1.0,0.6254901960784314,0.6254901960784314}$1.11$ & \cellcolor[rgb]{0.6784313725490196,0.6784313725490196,1.0}$0.05$ & \cellcolor[rgb]{0.75,0.75,1.0}$0.12$ & \\
\textsc{\;\;GMM} & \cellcolor[rgb]{1.0,0.6254901960784314,0.6254901960784314}$1.28$ & \cellcolor[rgb]{0.692156862745098,0.692156862745098,1.0}$0.06$ & \cellcolor[rgb]{0.75,0.7931372549019609,1.0}$0.17$ & \cellcolor[rgb]{0.669607843137255,0.669607843137255,1.0}$0.04$ & \cellcolor[rgb]{0.7274509803921569,0.7274509803921569,1.0}$0.09$ & \cellcolor[rgb]{1.0,0.6254901960784314,0.6254901960784314}$8.63$ & \cellcolor[rgb]{0.75,0.7617647058823529,1.0}$0.14$ & \cellcolor[rgb]{1.0,0.6254901960784314,0.6254901960784314}$1.63$ & \cellcolor[rgb]{0.6784313725490196,0.6784313725490196,1.0}$0.05$ & \cellcolor[rgb]{0.6254901960784314,0.6254901960784314,1.0}$0.00$ & \\
\textsc{\;\;Birch} & \cellcolor[rgb]{1.0,0.6254901960784314,0.6254901960784314}$1.51$ & \cellcolor[rgb]{0.6558823529411764,0.6558823529411764,1.0}$0.03$ & \cellcolor[rgb]{0.75,0.753921568627451,1.0}$0.13$ & \cellcolor[rgb]{0.6784313725490196,0.6784313725490196,1.0}$0.05$ & \cellcolor[rgb]{0.7009803921568627,0.7009803921568627,1.0}$0.07$ & \cellcolor[rgb]{1.0,0.6254901960784314,0.6254901960784314}$3.24$ & \cellcolor[rgb]{0.6254901960784314,0.6254901960784314,1.0}$0.00$ & \cellcolor[rgb]{0.9833333333333332,1.0,0.7588235294117647}$0.64$ & \cellcolor[rgb]{0.692156862745098,0.692156862745098,1.0}$0.06$ & \cellcolor[rgb]{0.75,0.7931372549019609,1.0}$0.17$ & \\
\textsc{\;\;HDBSCAN} & \cellcolor[rgb]{0.6254901960784314,0.6254901960784314,1.0}$0.00$ & \cellcolor[rgb]{0.6254901960784314,0.6254901960784314,1.0}$0.00$ & \cellcolor[rgb]{0.6254901960784314,0.6254901960784314,1.0}$0.00$ & \cellcolor[rgb]{0.6254901960784314,0.6254901960784314,1.0}$0.00$ & \cellcolor[rgb]{0.6254901960784314,0.6254901960784314,1.0}$0.00$ & \cellcolor[rgb]{0.6254901960784314,0.6254901960784314,1.0}$0.00$ & \cellcolor[rgb]{0.7274509803921569,0.7274509803921569,1.0}$0.09$ & \cellcolor[rgb]{0.6254901960784314,0.6254901960784314,1.0}$0.00$ & \cellcolor[rgb]{0.6254901960784314,0.6254901960784314,1.0}$0.00$ & \cellcolor[rgb]{0.6254901960784314,0.6254901960784314,1.0}$0.00$ & \\ \cmidrule[\heavyrulewidth](l){1-11}

\multicolumn{1}{c}{\multirow{2}{*}{\textsc{\shortstack{Clustering\\Methods}}}} & \multicolumn{5}{c}{MAP} & \multicolumn{5}{c}{HLB} & \\ \cmidrule(l){2-6} \cmidrule(l){7-11} 
& \textsc{MNIST} & \textsc{SVHN} & \textsc{CR10} & \textsc{CR100} & \textsc{MIN} & \textsc{MNIST} & \textsc{SVHN} & \textsc{CR10} & \textsc{CR100} & \textsc{MIN} & \\ \cmidrule[\heavyrulewidth](l){1-11}

\textsc{\;\;K-Means} & \cellcolor[rgb]{0.75,0.7931372549019609,1.0}$0.17$ & \cellcolor[rgb]{0.634313725490196,0.634313725490196,1.0}$0.01$ & \cellcolor[rgb]{0.634313725490196,0.634313725490196,1.0}$0.01$ & \cellcolor[rgb]{0.6254901960784314,0.6254901960784314,1.0}$0.00$ & \cellcolor[rgb]{0.6254901960784314,0.6254901960784314,1.0}$0.00$ & \cellcolor[rgb]{0.7274509803921569,0.7274509803921569,1.0}$0.09$ & \cellcolor[rgb]{0.6254901960784314,0.6254901960784314,1.0}$0.00$ & \cellcolor[rgb]{0.6254901960784314,0.6254901960784314,1.0}$0.00$ & \cellcolor[rgb]{0.6254901960784314,0.6254901960784314,1.0}$0.00$ & \cellcolor[rgb]{0.6254901960784314,0.6254901960784314,1.0}$0.00$ & \\
\textsc{\;\;GMM} & \cellcolor[rgb]{1.0,0.6254901960784314,0.6254901960784314}$3.39$ & \cellcolor[rgb]{0.6254901960784314,0.6254901960784314,1.0}$-0.01$ & \cellcolor[rgb]{0.634313725490196,0.634313725490196,1.0}$0.01$ & \cellcolor[rgb]{0.6254901960784314,0.6254901960784314,1.0}$0.00$ & \cellcolor[rgb]{0.6254901960784314,0.6254901960784314,1.0}$0.00$ & \cellcolor[rgb]{1.0,0.6254901960784314,0.6254901960784314}$2.53$ & \cellcolor[rgb]{0.6254901960784314,0.6254901960784314,1.0}$0.00$ & \cellcolor[rgb]{0.6254901960784314,0.6254901960784314,1.0}$0.00$ & \cellcolor[rgb]{0.6254901960784314,0.6254901960784314,1.0}$0.00$ & \cellcolor[rgb]{0.6254901960784314,0.6254901960784314,1.0}$0.00$ & \\
\textsc{\;\;Birch} & \cellcolor[rgb]{1.0,0.8117647058823529,0.75}$0.84$ & \cellcolor[rgb]{0.6254901960784314,0.6254901960784314,1.0}$-0.00$ & \cellcolor[rgb]{0.6254901960784314,0.6254901960784314,1.0}$0.00$ & \cellcolor[rgb]{0.634313725490196,0.634313725490196,1.0}$0.01$ & \cellcolor[rgb]{0.6254901960784314,0.6254901960784314,1.0}$0.00$ & \cellcolor[rgb]{1.0,0.8225490196078431,0.75}$0.83$ & \cellcolor[rgb]{0.6254901960784314,0.6254901960784314,1.0}$0.00$ & \cellcolor[rgb]{0.6254901960784314,0.6254901960784314,1.0}$0.00$ & \cellcolor[rgb]{0.634313725490196,0.634313725490196,1.0}$0.01$ & \cellcolor[rgb]{0.6254901960784314,0.6254901960784314,1.0}$0.00$ & \\
\textsc{\;\;HDBSCAN} & \cellcolor[rgb]{0.6254901960784314,0.6254901960784314,1.0}$0.00$ & \cellcolor[rgb]{0.6254901960784314,0.6254901960784314,1.0}$0.00$ & \cellcolor[rgb]{0.6254901960784314,0.6254901960784314,1.0}$0.00$ & \cellcolor[rgb]{0.6254901960784314,0.6254901960784314,1.0}$0.00$ & \cellcolor[rgb]{0.6254901960784314,0.6254901960784314,1.0}$0.00$ & \cellcolor[rgb]{0.6254901960784314,0.6254901960784314,1.0}$0.00$ & \cellcolor[rgb]{0.6254901960784314,0.6254901960784314,1.0}$0.00$ & \cellcolor[rgb]{0.6254901960784314,0.6254901960784314,1.0}$0.00$ & \cellcolor[rgb]{0.6254901960784314,0.6254901960784314,1.0}$0.00$ & \cellcolor[rgb]{0.6254901960784314,0.6254901960784314,1.0}$0.00$ & \\ \cmidrule[\heavyrulewidth](l){1-11}

\textsc{\;\;K-Means} & \cellcolor[rgb]{1.0,0.6254901960784314,0.6254901960784314}$15.91$ & \cellcolor[rgb]{0.7274509803921569,0.7274509803921569,1.0}$0.09$ & \cellcolor[rgb]{0.6254901960784314,0.6254901960784314,1.0}$0.00$ & \cellcolor[rgb]{0.6558823529411764,0.6558823529411764,1.0}$0.03$ & \cellcolor[rgb]{0.634313725490196,0.634313725490196,1.0}$0.01$ & \cellcolor[rgb]{1.0,0.6254901960784314,0.6254901960784314}$13.67$ & \cellcolor[rgb]{0.6254901960784314,0.6254901960784314,1.0}$-0.04$ & \cellcolor[rgb]{0.634313725490196,0.634313725490196,1.0}$0.01$ & \cellcolor[rgb]{0.6470588235294117,0.6470588235294117,1.0}$0.02$ & \cellcolor[rgb]{0.6254901960784314,0.6254901960784314,1.0}$-0.00$ \\
\textsc{\;\;GMM} & \cellcolor[rgb]{1.0,0.6254901960784314,0.6254901960784314}$42.43$ & \cellcolor[rgb]{0.75,0.75,1.0}$0.11$ & \cellcolor[rgb]{0.6254901960784314,0.6254901960784314,1.0}$0.00$ & \cellcolor[rgb]{0.6558823529411764,0.6558823529411764,1.0}$0.03$ & \cellcolor[rgb]{0.6254901960784314,0.6254901960784314,1.0}$0.00$ & \cellcolor[rgb]{1.0,0.6254901960784314,0.6254901960784314}$14.96$ & \cellcolor[rgb]{0.6254901960784314,0.6254901960784314,1.0}$-0.04$ & \cellcolor[rgb]{0.634313725490196,0.634313725490196,1.0}$0.01$ & \cellcolor[rgb]{0.6470588235294117,0.6470588235294117,1.0}$0.02$ & \cellcolor[rgb]{0.6254901960784314,0.6254901960784314,1.0}$0.00$ & \\
\textsc{\;\;Birch} & \cellcolor[rgb]{1.0,0.6254901960784314,0.6254901960784314}$7.09$ & \cellcolor[rgb]{0.6254901960784314,0.6254901960784314,1.0}$-0.07$ & \cellcolor[rgb]{0.6254901960784314,0.6254901960784314,1.0}$-0.02$ & \cellcolor[rgb]{0.634313725490196,0.634313725490196,1.0}$0.01$ & \cellcolor[rgb]{0.6254901960784314,0.6254901960784314,1.0}$-0.00$ & \cellcolor[rgb]{1.0,0.6254901960784314,0.6254901960784314}$18.44$ & \cellcolor[rgb]{0.6254901960784314,0.6254901960784314,1.0}$-0.07$ & \cellcolor[rgb]{0.6254901960784314,0.6254901960784314,1.0}$0.00$ & \cellcolor[rgb]{0.634313725490196,0.634313725490196,1.0}$0.01$ & \cellcolor[rgb]{0.6470588235294117,0.6470588235294117,1.0}$0.02$ & \\
\textsc{\;\;HDBSCAN} & \cellcolor[rgb]{0.853921568627451,1.0,0.8882352941176471}$0.48$ & \cellcolor[rgb]{0.6254901960784314,0.6254901960784314,1.0}$0.00$ & \cellcolor[rgb]{0.6254901960784314,0.6254901960784314,1.0}$0.00$ & \cellcolor[rgb]{0.6254901960784314,0.6254901960784314,1.0}$0.00$ & \cellcolor[rgb]{0.6254901960784314,0.6254901960784314,1.0}$0.00$ & \cellcolor[rgb]{1.0,0.6254901960784314,0.6254901960784314}$7.60$ & \cellcolor[rgb]{0.634313725490196,0.634313725490196,1.0}$0.01$ & \cellcolor[rgb]{0.6254901960784314,0.6254901960784314,1.0}$0.00$ & \cellcolor[rgb]{0.6254901960784314,0.6254901960784314,1.0}$0.00$ & \cellcolor[rgb]{0.6254901960784314,0.6254901960784314,1.0}$0.00$ & \\ \cmidrule[\heavyrulewidth](l){1-11}
\end{tabular}%
}
\end{table}

However, improved accuracy is not useful in this scenario if more information is leaked. The test in this scenario, as proposed by ~\cite{pmlr-v162-alam22a}, is to calculate the Adjusted Rand Index (ARI) after attempting to cluster the inputs $\boldsymbol{x}$ and the outputs $\boldsymbol{\hat{y}}$, which are available/visible to the snooping third-party. To be successful, the ARI must be near zero (indicating random label assignment) for both inputs and outputs. 

We use  K-means, Gaussian Mixture Model (GMM), Birch \cite{birch}, and HDBSCAN \cite{hdbscan} as the clustering algorithms and specify the true number of classes to each method to maximize attacker success (information they would not know). The results can be found in \autoref{tab:clustering}, where the top rows indicate the clustering of the input $\mathcal{B}(\boldsymbol{x},\boldsymbol{s})$, and the bottom rows the clustering of the output $\boldsymbol{\hat{y}}$.  All the numbers are percentages $(\%)$, showing all methods do a good job at hiding information from the adversary (except on the MNIST dataset, which is routinely degenerate). 

The MNIST result is a good reminder that CSPS security is heuristic, not guaranteed. Nevertheless, we see \ShortName~has consistently close-to-zero scores for SVHN, CIFARs, and Mini-ImageNet, indicating that its improved accuracy with simultaneously improved security. This also validates the use of the VSA in deep learning architecture design and the efficacy of our approach.

\subsubsection{Xtreme Multi-Label Classification} \label{sec:xml}

Extreme Multi-label  (XML) is the scenario where, given a single input of size $d$, $C >> d$ classes are used to predict. This is common in e-commerce applications where new products need to be tagged, and an input on the order of $d \approx 5000$ is relatively small compared to $C \geq $ 100,000 or more classes. This imposes unique computational constraints due to the output space being larger than the input space and is generally only solvable because the output space is sparse --- often less than 100 relevant classes will be positive for any one input. VSAs have been applied to XML by exploiting the low positive class occurrence rate to represent the problem symbolically~\cite{ganesan2021learning}.

While many prior works focus on innovative strategies to cluster/make hierarchies/compress the penultimate layer\cite{jain2016extreme,jalan2019accelerating,niculescu2017label,jain2019slice,you2019attentionxml,joulin2017efficient}, a neuro-symbolic approach was proposed by \cite{ganesan2021learning}. Given $K$ total possible classes, they assigned each class a vector $\boldsymbol{c}_k$ to be each class's representation, and the set of all classes $\boldsymbol{a} = \sum_{k = 1}^K \boldsymbol{c}_k$. 

The VSA trick used by \cite{ganesan2021learning} was to define an additional ``present'' class $\boldsymbol{p}$ and a ``missing'' class $\boldsymbol{m}$. Then, the target output of the network $f(\cdot)$ is itself a vector composed of two parts added together. First, $\mathcal{B}(\boldsymbol{p},\sum_{k} \boldsymbol{c}_k)$ represents all \textit{present} classes, and so the sum is over a finite smaller set. Then the absent classes compute the \textit{missing} representing $\mathcal{B}(\boldsymbol{m}, \boldsymbol{a}-\sum_{k} \boldsymbol{c}_k)$, which again only needs to compute over the finite set of present classes, yet represents the set of all non-present classes by exploiting the symbolic properties of the VSA. 

For XML classification, we have a set of $K$ classes that will be present for a given input, where $K \approx 10$ is the norm. Yet, there will be $L$ total possible classes where $L \geq 100,000$ is quite common. Forming a normal linear layer to produce $L$ outputs is the majority of computational work and memory use in standard XML models, and thus the target for reduction. A VSA can be used to side-step this cost, as shown by \cite{ganesan2021learning}, by leveraging the symbolic manipulation of the outputs. First, consider the target label as a vector $\boldsymbol{s} \in \mathrm{R}^d$ such that $d \ll L$. By defining a VSA vector to represent ``present'' and ``missing'' classes as $\mathbf{p}$ and $\mathbf{m}$, where each class is given its own vector $\boldsymbol{c}_{1, \ldots, L}$, we can shift the computational complexity form $\mathcal{O}(L)$ to $\mathcal{O}(K)$ by manipulating the ``missing'' classes as the compliment of the present classes as shown in \autoref{eq:xml_bind}. 
\par 
Similarly, the loss to calculate the gradient can be computed based on the network's prediction $\boldsymbol{\hat{s}}$ by taking the cosine similarity between each expected class and one cosine similarity for the representation of all missing classes. The excepted response of 1 or 0 for an item being present/absent from the VSA is used to determine if we want the similarity to be 0 (1-$\cos$) or 1 (just $\cos$), as shown in \autoref{eq:xml_loss}. 
\begin{equation} \label{eq:xml_bind}
\boldsymbol{s} = \overbrace{\sum_{i\in y_{i} =1}\mathcal{B}(\boldsymbol{p} ,\boldsymbol{c}_{i})}^{\text{Labels Present} \mathcal{O}(d K)} +\overbrace{\sum _{j\in y_{j} =-1} \mathcal{B}(\boldsymbol{m} ,\boldsymbol{c}_{j})}^{\text{Labels Absent} \mathcal{O}(d L)}  =\overbrace{\mathcal{B}\left(\boldsymbol{p} ,\left(\boldsymbol{a} \eqcolon \sum _{i\in y_{i} =1} c_{i}\right)\right)}^{\text{Labels Present} \mathcal{O}( d\ K)} +\overbrace{\mathcal{B}\left(\boldsymbol{m} ,\left(\boldsymbol{a} -\sum _{i\in y_{i} =1} c_{i}\right)\right)}^{\text{Labels Absent} \mathcal{O}(d K)}
\end{equation}
\begin{equation} \label{eq:xml_loss}
loss=\ \overbrace{\sum _{i\in y_{i} =1}\left( 1-\cos\left(\mathcal{B}^{*}(\boldsymbol{p} ,\hat{\boldsymbol{s}}) ,\boldsymbol{c}_{i}\right)\right)}^{\text{Present Classes} \ \mathcal{O}( d\ K)} +\overbrace{\cos\left(\mathcal{B}^{*}(\boldsymbol{m} ,\hat{s}) ,\sum _{i\in y_{i} =1}\boldsymbol{c}_{i} \ \right)}^{\text{Absent classes} \ O( d\ K)}
\end{equation}

The details and network sizes of \cite{ganesan2021learning} are followed, except we replace the original VSA with our four candidates. 
The network is trained on $8$ datasets listed in \autoref{tab:xml_results} from \cite{Bhatia16} and evaluated using normalized discounted cumulative gain (nDCG) and propensity-scored (PS) based normalized discounted cumulative gain (PSnDCG) as suggested by \cite{jain2016extreme}. 

\begin{table}[!h]
\centering
\caption{XML classification results in dense label representation with HRR, VTB, MAP, and \ShortName~in terms of nDCG and PSnDCG. The proposed \ShortName~has attained the best nDCG and PSnDCG scores on all the datasets setting a new SOTA.}
\vspace{5pt}
\label{tab:xml_results}
\renewcommand{\arraystretch}{1.0}
\resizebox{0.95\textwidth}{!}{%
\begin{tabular}{@{}lcccccccc@{}}
\toprule
 
\textsc{Dataset} & \multicolumn{2}{c}{\textsc{Bibtex}} & \multicolumn{2}{c}{\textsc{Delicious}} & \multicolumn{2}{c}{\textsc{Mediamill}} & \multicolumn{2}{c}{\textsc{EURLex-4K}} \\ \cmidrule(r){1-1} \cmidrule(lr){2-3} \cmidrule(lr){4-5} \cmidrule(lr){6-7} \cmidrule(l){8-9} 
\textsc{Metrics} & \texttt{nDCG} & \texttt{PSnDCG} & \texttt{nDCG} & \texttt{PSnDCG} & \texttt{nDCG} & \texttt{PSnDCG} & \texttt{nDCG} & \texttt{PSnDCG} \\ \toprule
\textsc{HRR} & $60.296$ & $45.572$ & $66.454$ & $30.016$ & $83.885$ & $63.684$ & $77.225$ & $30.684$ \\
\textsc{VTB} & $57.693$ & $45.219$ & $63.325$ & $31.449$ & $87.232$ & $66.948$ & $76.964$ & $31.180$ \\
\textsc{MAP-C} & $59.280$ & $46.092$ & $65.376$ & $31.943$ & $87.255$ & $66.886$ & $72.439$ & $26.752$ \\
\textsc{MAP-B} & $59.412$ & $46.340$ & $65.431$ & $32.122$ & $86.886$ & $66.562$ & $71.128$ & $26.340$ \\
\textsc{HLB} & $\mathbf{61.741}$ & $\mathbf{48.639}$ & $\mathbf{67.821}$ & $\mathbf{32.797}$ & $\mathbf{88.064}$ & $\mathbf{67.525}$ & $\mathbf{77.868}$ & $\mathbf{31.526}$ \\ \noalign{\vskip 1mm} \toprule

\textsc{Dataset} & \multicolumn{2}{c}{\textsc{EURLex-4.3K}} & \multicolumn{2}{c}{\textsc{Wiki10-31K}} & \multicolumn{2}{c}{\textsc{Amazon-13K}} & \multicolumn{2}{c}{\textsc{Delicious-200K}} \\ \cmidrule(r){1-1} \cmidrule(lr){2-3} \cmidrule(lr){4-5} \cmidrule(lr){6-7} \cmidrule(l){8-9}
\textsc{Metrics} & \texttt{nDCG} & \texttt{PSnDCG} & \texttt{nDCG} & \texttt{PSnDCG} & \texttt{nDCG} & \texttt{PSnDCG} & \texttt{nDCG} & \texttt{PSnDCG} \\ \toprule
\textsc{HRR} & $84.497$ & $38.545$ & $81.068$ & $9.185$ & $93.258$ & $49.642$ & $44.933$ & $6.839$ \\
\textsc{VTB} & $84.663$ & $38.540$ & $78.025$ & $9.645$ & $92.373$ & $49.463$ & $44.092$ & $6.664$ \\
\textsc{MAP-C} & $85.472$ & $39.233$ & $80.203$ & $10.027$ & $92.013$ & $48.686$ & $45.373$ & $6.862$ \\
\textsc{MAP-B} & $85.023$ & $38.820$ & $80.238$ & $10.035$ & $92.307$ & $48.812$ & $45.459$ & $6.870$ \\
\textsc{HLB} & $\mathbf{88.204}$ & $\mathbf{43.622}$ & $\mathbf{83.589}$ & $\mathbf{11.869}$ & $\mathbf{93.672}$ & $\mathbf{50.270}$ & $\mathbf{46.331}$ & $\mathbf{6.952}$ \\ \bottomrule
\end{tabular}%
}
\end{table}

The classification result in terms of nDCG and PSnDCG in all the eight datasets is presented in \autoref{tab:xml_results} where the top four datasets are comparatively easy with maximum no. of features of $5000$ and no. of labels of $4000$. The bottom four datasets are comparatively hard with the no. of features and labels on the scale of $100K$. The proposed \ShortName~has attained the best results in all the datasets on both metrics. 
In contrast to the prior CSPS results, here we see that the performance differences between HRR, VTB, and MAP are more varied, with no clear ``second-place'' performer.

\section{Conclusion} \label{sec:conclusion}
In this paper, a novel linear vector symbolic architecture named \ShortName~is presented derived from Hadamard transform. Along with an initialization condition named MiND distribution is proposed for which we proved the cosine similarity $\phi$ is approximately equal to the inverse square root of the no. of bundled vector pairs $\rho$ which matches with the experimental results. The proposed \ShortName~showed superior performance in classical VSA tasks and deep learning compared to other VSAs such as HRR, VTB, and MAP. In learning tasks, \ShortName~is applied to CSPS and XML classification tasks. In both of the tasks, \ShortName~has achieved the best results in terms of respective metrics in all the datasets showing a diverse potential of \ShortName~in Neuro-symbolic AI.

\bibliographystyle{ACM-Reference-Format}
\bibliography{refs}

\newpage
\appendix

\section{Noise Decomposition} \label{appendix:noise_decomp}
When a single vector pair is combined, one of the vector pairs can be exactly retrieved with the help of the other component and the inverse function, recalling the retrieved output does not contain any noise component for a single pair of vectors, i.e., $\rho = 1$. However, when more than one vector pairs are bundled, noise starts to accumulate. In this section, we will uncover the noise components accumulated with and without the projection to the inputs and analyze their impact on expectation. We first start with the noise component without the projection step $\eta_i^\circ$.

\begin{equation}
\eta_i^\circ = \frac{1}{d} \cdot H (\frac{1}{H y_i} \odot \sum_{\substack{j=1\\j \neq i}}^{\rho} (H x_j \odot H y_j)
\end{equation}

Let, set the value of $n$ to be $1$ thus, $d=2^{n}=2$ and the number of vector pairs $\rho = 2$, i.e., $\chi_{\rho=2} = \mathcal{B}(x_1, y_1) + \mathcal{B}(x_2, y_2)$. We want to retrieve $x_1$ using the query $y_1$, thereby, the expression of $\eta_i^\circ$ is uncovered step by step for $\rho = 2$ shown in \autoref{eq:noise_no_proj}.

\begin{equation} \label{eq:noise_no_proj}
\begin{split}
\underset{\rho=2}{\eta_i^\circ} &= \frac{1}{d} \cdot H (\frac{1}{H y_1} \odot (H x_2 \odot H y_2)) \\ 
&= \frac{1}{d} \cdot \sqrt{d} \cdot H \left( \begin{array}{c}
     \frac{1}{y_{1}^{(0)} + y_{1}^{(1)}} \\
     \frac{1}{y_{1}^{(0)} - y_{1}^{(1)}} 
\end{array} \odot \begin{array}{cc}
     (x_{2}^{(0)} + x_{2}^{(1)}) \cdot (y_{2}^{(0)} + y_{2}^{(1)}) \\
     (x_{2}^{(0)} - x_{2}^{(1)}) \cdot (y_{2}^{(0)} - y_{2}^{(1)})
\end{array}\right) \\ 
&= \frac{1}{d} \cdot d \cdot \left( \begin{array}{c}
     \frac{(x_{2}^{(0)}+x_{2}^{(1)})\,(y_{2}^{(0)}+y_{2}^{(1)})\,(y_{1}^{(0)}-y_{1}^{(1)})\,+\,(x_{2}^{(0)}-x_{2}^{(1)})\,(y_{2}^{(0)}-y_{2}^{(1)})\,(y_{1}^{(0)}+y_{1}^{(1)})}{(y_{1}^{(0)} + y_{1}^{(1)})\,(y_{1}^{(0)} - y_{1}^{(1)})} \\
     \frac{(x_{2}^{(0)}+x_{2}^{(1)})\,(y_{2}^{(0)}+y_{2}^{(1)})\,(y_{1}^{(0)}-y_{1}^{(1)})\,-\,(x_{2}^{(0)}-x_{2}^{(1)})\,(y_{2}^{(0)}-y_{2}^{(1)})\,(y_{1}^{(0)}+y_{1}^{(1)})}{(y_{1}^{(0)} + y_{1}^{(1)})\,(y_{1}^{(0)} - y_{1}^{(1)})}
\end{array} \right) \\
&= \left( \begin{array}{c}
     \frac{\varphi_1}{\prod_{k=1}^{d} (H y_1)_k} \\ 
     \frac{\varphi_2}{\prod_{k=1}^{d} (H y_1)_k}
\end{array} \right) \\ 
&= \frac{\mathcal{P}(x_2, y_2, y_1)}{\prod_{k=1}^{d} (H y_1)_k}
\end{split}
\end{equation}

Here, $\varphi_k \; \forall \; k \in \mathbb{N} : 1 \leq k \leq d$ are the polynomials comprises of $(x_2,~y_2)$, and the query vector $y_1$. $\mathcal{P}$ is the vector of polynomials consisting of $\varphi_k$. From the noise expression, we can observe that the numerator is a polynomial and the denominator is the product of all the elements of the Hadamard transformation of the query vector. This is true for any value of $n$ and $\rho$. Thus, in general, for any query $y_i$ we can express $\eta_i^\circ$ as shown in \autoref{eq:noise_eta}.

\begin{equation} \label{eq:noise_eta}
\eta_i^\circ = \frac{\polynomial\limits_{\substack{j=1,~j \neq i}}^{\rho}{(x_j, y_j, y_i)}}{\prod_{k=1}^{d} (H y_i)_k}
\end{equation}

The noise accumulated after applying the projection to the inputs is quite straightforward as given in \autoref{eq:noise_pi}.

\begin{equation} \label{eq:noise_pi}
\eta_i^\pi = \frac{\sum\limits_{\substack{j=1,~j \neq i}}^{\rho} (x_j \odot y_j)}{y_i}
\end{equation}

Although the vectors $x_i, y_i \; \forall \; i \in \mathbb{N} : 1 \leq i \leq \rho$ are sampled from a MiND with an expected value of $0$ given in \autoref{eq:init_cdn}, the sample mean of $x_i$ or $y_i$ would be $\hat{\mu} \approx 0$ but $\hat{\mu} \neq 0$. Both the numerator of $\eta_i^\circ$ and $\eta_i^\pi$ are the polynomials thus the expected value would be very close to $0$. However, the expected value of the denominator of $\eta_i^\circ$ would be $\mathrm{E}[\prod_{k=1}^{d} (H y_i)_k] = \prod_{k=1}^{d} \mathrm{E}[(H y_i)_k] = {\hat{\mu}}^d$ whereas the expected value of the denominator of $\eta_i^\pi$ is $\mathrm{E}[y_i] = \hat{\mu}$. Since, $\hat{\mu}^d < \hat{\mu}$, hence, in expectation $\eta_i^\pi < \eta_i^\circ$. This is also verified by an empirical study where $n$, i.e., the dimension $d = 2^n$ is varied along with the no. of bound vector pairs $\rho$ and the amount of absolute mean noise in retrieval is estimated. 
\par 
\autoref{fig:noise_heatmap} shows the heatmap visualization of the noise for both $\eta_i^\circ$ and $\eta_i^\pi$ in natural log scale. The amount of noise accumulated without any projection to the inputs is much higher compared to the noise accumulation with the projection. For varying $n$ and $\rho$, the maximum amount of noise accumulated when projection is applied is $7.18$ and without any projection, the maximum amount of noise is $19.38$. Also, most of the heatmap of $\eta_i^\pi$ remains in the \textcolor{blue}{blue} region whereas as $n$ and $\rho$ increase, the heatmap of $\eta_i^\circ$ moves towards the \textcolor{red}{red} region. Therefore, it is evident that the projection to the inputs diminishes the amount of accumulated noise with the retrieved output.

\begin{figure*}[!htbp] 
\centerline{\includegraphics[width=\textwidth]{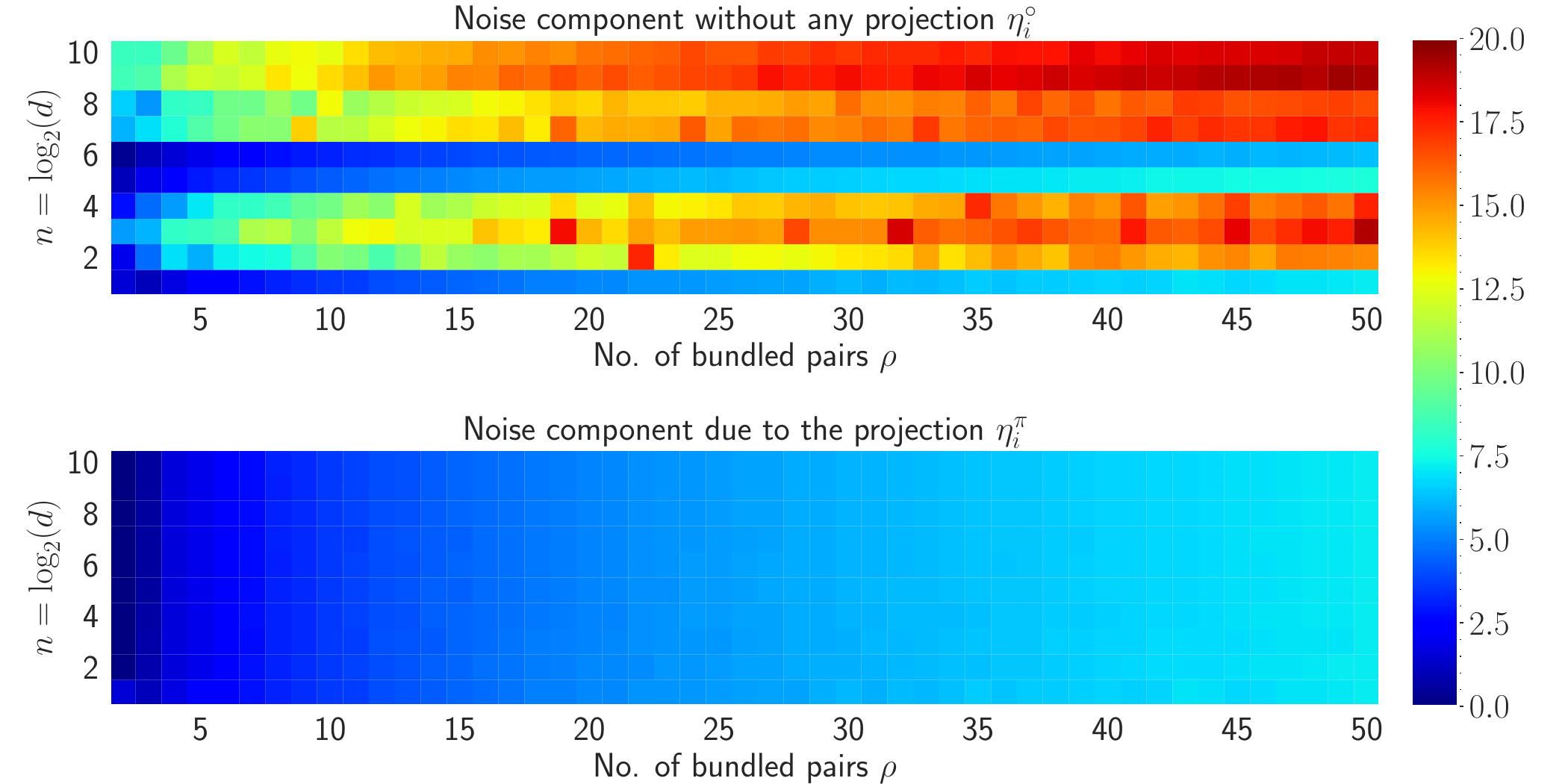}}
\caption{Heatmap of the empirical comparison of the noise components $\eta_i^\circ$ and $\eta_i^\pi$ for varying $n$ and $\rho$ shown in natural logarithm scale. The dimension, i.e., $d = 2^n$ is varied from $2$ to $1024$ $(n \in \{1, 2, \cdots, 10\})$ and the number of vector pairs bundled is varied from $2$ to $50$.}
\label{fig:noise_heatmap}
\end{figure*}

\section{Norm Relation} \label{appendix:norm}
\begin{theorem}[$\chi_\rho$ -- $\rho$ Relationship] \label{thm:norm}
Given $x_i, y_i \sim \Omega(\mu, 1 / d) \in \mathbb{R}^d \; \forall \; i \in \mathbb{N} : 1 \leq i \leq \rho$, the norm of the composite representation $\chi_\rho$ is proportional to $\sqrt{\rho}$ and approximately equal to the $\mu^2 \sqrt{\rho \cdot d}$.
\end{theorem}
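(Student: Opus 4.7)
The plan is to compute $\mathbb{E}\bigl[\|\chi_\rho\|_2^2\bigr]$ directly from the definition $\chi_\rho = \sum_{i=1}^\rho (x_i \odot y_i)$ and then take the square root, appealing to concentration (since the coordinates are independent with small per-coordinate variance $1/d$) to justify approximating the random norm by the square root of its expected square.

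First I would write out the squared norm coordinatewise,
\begin{equation*}
\|\chi_\rho\|_2^2 \;=\; \sum_{k=1}^{d}\Bigl(\sum_{i=1}^{\rho} x_i^{(k)} y_i^{(k)}\Bigr)^{\!2}
\;=\; \sum_{k=1}^{d}\sum_{i=1}^{\rho}\bigl(x_i^{(k)} y_i^{(k)}\bigr)^{2} \;+\; \sum_{k=1}^{d}\sum_{\substack{i,j=1\\ i\neq j}}^{\rho} x_i^{(k)}y_i^{(k)}x_j^{(k)}y_j^{(k)}.
\end{equation*}
Next I would take expectations, exploiting that the $x_i,y_i$ are independent across $i$ and that each $x_i^{(k)}, y_i^{(k)} \sim \Omega(\mu,1/d)$ has $\mathbb{E}[x]=0$ from Properties~\ref{prop:init_cdn}. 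The off-diagonal terms factor as products containing at least one $\mathbb{E}[x_j^{(k)}]=0$ or $\mathbb{E}[y_j^{(k)}]=0$ and therefore vanish in expectation. For the diagonal terms I would invoke the identity $\|x_i\|_2^2 = \mu^2 d$ from Properties~\ref{prop:init_cdn}, which gives per-coordinate $\mathbb{E}[(x_i^{(k)})^2]=\mu^2$, so
\begin{equation*}
\mathbb{E}\bigl[\|\chi_\rho\|_2^2\bigr] \;=\; \sum_{i=1}^\rho \sum_{k=1}^d \mathbb{E}[(x_i^{(k)})^2]\,\mathbb{E}[(y_i^{(k)})^2] \;=\; \rho \cdot d \cdot \mu^2 \cdot \mu^2 \;=\; \rho\,\mu^4 d.
\end{equation*}
Taking the square root yields $\|\chi_\rho\|_2 \approx \mu^2\sqrt{\rho d}$, which is the claimed relation and shows the advertised $\sqrt{\rho}$ scaling.

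The main obstacle is justifying the ``$\approx$'', i.e., that the random quantity $\|\chi_\rho\|_2$ is tightly concentrated around $\sqrt{\mathbb{E}[\|\chi_\rho\|_2^2]}$. I would address this by noting that $\|\chi_\rho\|_2^2$ is a sum over $d$ independent coordinates (since the $k$-th coordinate depends only on $\{x_i^{(k)},y_i^{(k)}\}_{i=1}^\rho$), each of small variance because the Gaussian component has variance $1/d$ around a bounded mean $\pm\mu$. A direct variance computation shows $\operatorname{Var}(\|\chi_\rho\|_2^2) = O(\rho^2/d)$ relative to the mean of order $\rho\mu^4 d$, so the relative fluctuations vanish as $d$ grows and the approximation $\|\chi_\rho\|_2 \approx \mu^2\sqrt{\rho d}$ is valid in the high-dimensional regime where \ShortName{} is used. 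A remark to this effect, together with consistency with the empirical plots already used for Theorem~\ref{thm:cosine}, should suffice for a complete argument.
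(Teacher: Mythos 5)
Your proposal is correct and takes essentially the same route as the paper's proof: expand $\|\chi_\rho\|_2^2$ for $\chi_\rho=\sum_{i=1}^{\rho}(x_i\odot y_i)$, keep the $\rho$ diagonal terms each contributing $\mu^4 d$ (via Properties~\ref{prop:init_cdn}), discard the cross terms (the paper's $\xi$) as negligible noise, and take the square root to get $\mu^2\sqrt{\rho\, d}$. Your write-up is in fact somewhat more careful than the paper's, since you justify dropping the cross terms by their zero expectation under independence and sketch a concentration argument for the final ``$\approx$'', whereas the paper simply discards $\xi$ by fiat.
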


\begin{proof}[Proof of \autoref{thm:norm}] \label{proof:norm}
Given $\chi_\rho$ is the composite representation of the bound vectors, i.e., the summation of $\rho$ no. of individual bound terms. First, let's compute the norm of the single bound term as shown in \autoref{eq:bound_norm}. 
\begin{equation} \label{eq:bound_norm}
\begin{split}
\left\Vert \mathcal{B}(x_i, y_i) \right\Vert_2 &= \left\Vert x_i \cdot y_i \right\Vert_2 \\ 
&= \sqrt{(x_i^{(1)} y_i^{(1)})^2 + (x_i^{(2)} y_i^{(2)})^2 + \cdots + (x_i^{(d)} y_i^{(d)})^2} \\ 
&= \sqrt{(\pm \mu^2)^2 + (\pm \mu^2)^2 + \cdots + (\pm \mu^2)^2} \quad \left[ E[x^{(1)}] \cdot E[y^{(1)}] = \pm \mu \cdot \pm \mu = \pm \mu^2 \right] \\ 
&= \sqrt{\mu^4 d}
\end{split}
\end{equation}
Now, let's expand and compute the square norm of the composite representation given in \autoref{eq:composite_norm}. 
\begin{equation} \label{eq:composite_norm}
\begin{split}
\left\Vert \chi_\rho \right\Vert^2_2 &= \left\Vert \mathcal{B}(x_1, y_1) + \mathcal{B}(x_2, y_2) + \cdots + \mathcal{B}(x_\rho, y_\rho) \right\Vert^2_2 \\ 
&= \left\Vert \mathcal{B}(x_1, y_1) \right\Vert^2_2 + \left\Vert\mathcal{B}(x_2, y_2) \right\Vert^2_2 + \cdots + \left\Vert \mathcal{B}(x_\rho, y_\rho) \right\Vert^2_2 \; + \; \xi \\ 
&\text{where $\xi$ is the rest of the terms of square expansion.} \\ 
&= \mu^4 d + \mu^4 d + \cdots + \mu^4 d + \xi \\ 
&= \rho \cdot \mu^4 d + \xi \\ 
\left\Vert \chi_\rho \right\Vert_2 &= \sqrt{\rho \cdot \mu^4 d + \xi} \\ 
&\approx \sqrt{\rho \cdot \mu^4 d} \quad \left[ \text{ $\xi$ is the noise term and discarded to make an approximation } \right] \\ 
&= \mu^2 \sqrt{\rho \cdot d} \qed
\end{split}
\end{equation}
Thus, given the composite representation and the mean of the MiND distribution, we can estimate the no. of bound terms bundled together by $\rho \approx \left\Vert \chi_\rho \right\Vert^2_2 / \mu^4 d$. 
\end{proof}

\autoref{fig:norm_relation} shows the comparison between the theoretical relationship and actual experimental results where the norm of the composite representation is computed for $\mu = 0.5$ and $\rho = \{1, 2, \cdots, 200\}$. The figure indicates that the theoretical relationship aligns with the experimental results. However, as the number of bundled pair increases, the variation in the norm increases. This is because of making the approximation by discarding $\xi$ in \autoref{eq:composite_norm}.

\begin{figure*}[!htbp] 
\centerline{\includegraphics[width=\textwidth]{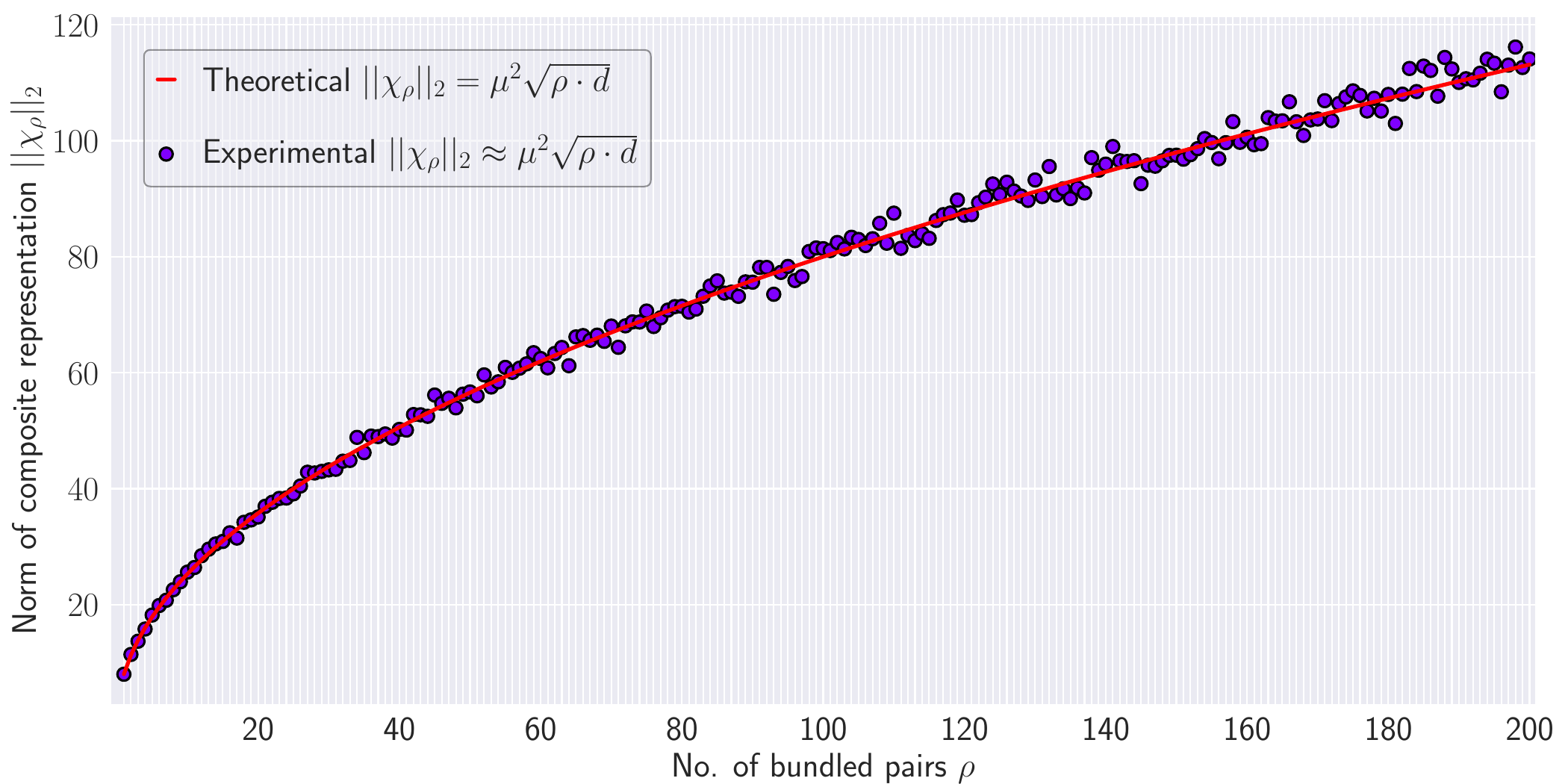}}
\caption{Comparison between the theoretical and experimental relationship of \autoref{thm:norm}. The norm of the composite representation of the bound vectors is computed for no. of bundled vectors from $1$ to $200$ of dimension $d = 1024$. The figure shows how the experimental value of the norm closely follows the theoretical relation between $\left\Vert \chi_\rho \right\Vert_2$ and $\rho$.}
\label{fig:norm_relation}
\end{figure*}

\clearpage
\section{Cosine Relation} \label{appendix:cosine}
\autoref{thm:cosine} shows how the cosine similarity $\phi$ between the original $x_i$ and retrieved vector $\hat{x_i}$ is approximately equal to the inverse square root of the number of vector pairs in a composite representation $\rho$. In this section, we will perform an empirical analysis of the theorem and compare it with the theoretical results. For $\rho = \{1, 2, \cdots, 50\}$, similarity score $\phi$ is calculated for vector dimension $d = 512$. Additionally, the theoretical cosine similarity score is also calculated using the value of $\phi$ following the theorem. \autoref{fig:cosine_relation} shows the comparison between the two results where the experimental result closely follows the theoretical result. The figure also shows the standard deviation for $100$ trials, indicating a minute change from the actual value.

\begin{figure*}[!htbp] 
\centerline{\includegraphics[width=\textwidth]{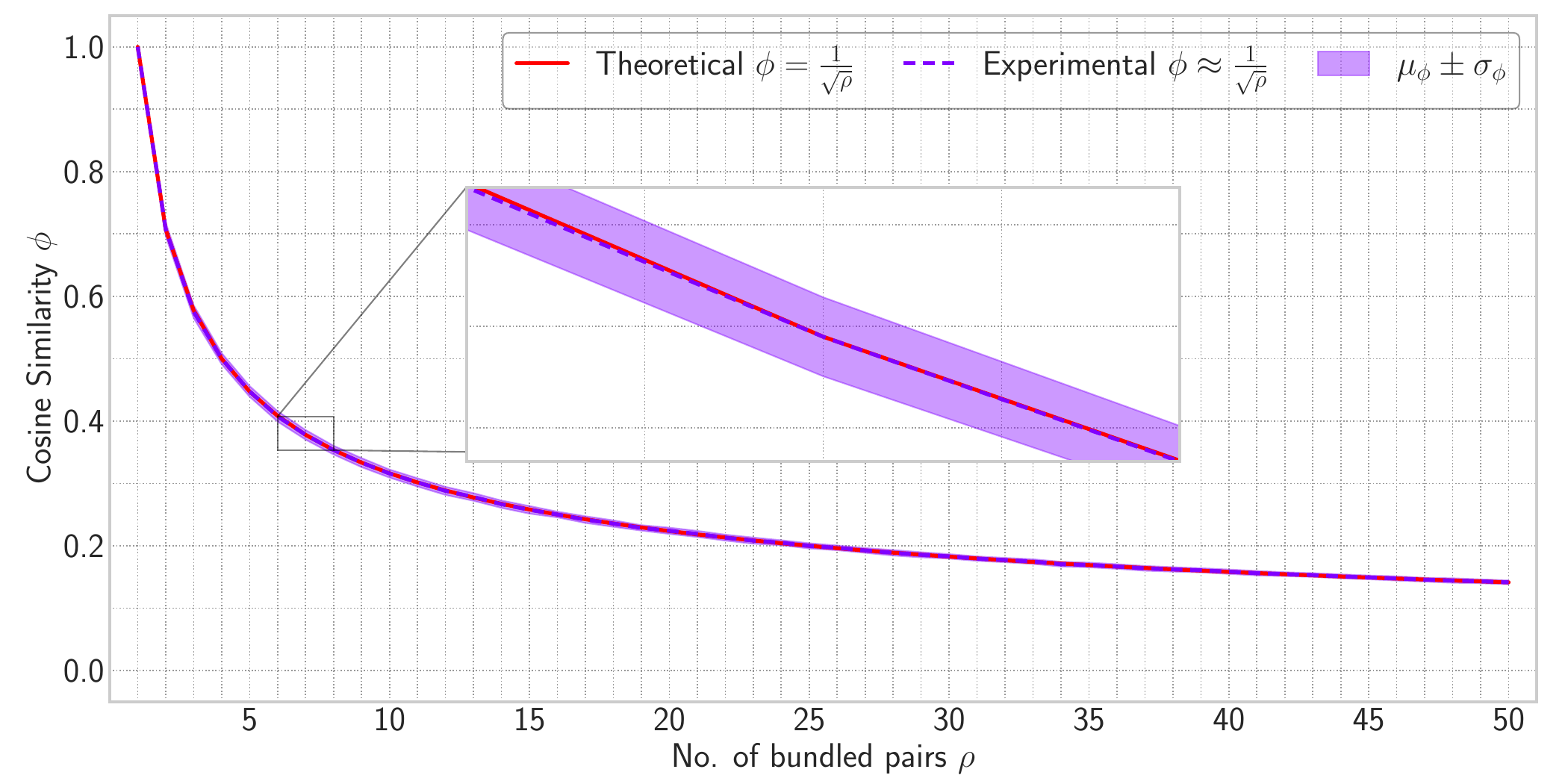}}
\caption{Comparison between the theoretical and experimental $\phi - \rho$ relationship. Vectors of dimension $d = 512$ are combined and retrieved with a varied number of vectors from $1$ to $50$. The zoom portion shows how closely experimental results match with the theoretical conclusion.}
\label{fig:cosine_relation}
\end{figure*}

\clearpage
\section*{NeurIPS Paper Checklist}

\begin{enumerate}

\item {\bf Claims}
    \item[] Question: Do the main claims made in the abstract and introduction accurately reflect the paper's contributions and scope?
    \item[] Answer: \answerYes{} %
    \item[] Justification: The paper claims to present a new Hadamard-derived linear vector symbolic architecture in the abstract and introduction which accurately reflects the contribution and scope of the paper. 
    \item[] Guidelines:
    \begin{itemize}
        \item The answer NA means that the abstract and introduction do not include the claims made in the paper.
        \item The abstract and/or introduction should clearly state the claims made, including the contributions made in the paper and important assumptions and limitations. A No or NA answer to this question will not be perceived well by the reviewers. 
        \item The claims made should match theoretical and experimental results, and reflect how much the results can be expected to generalize to other settings. 
        \item It is fine to include aspirational goals as motivation as long as it is clear that these goals are not attained by the paper. 
    \end{itemize}

\item {\bf Limitations}
    \item[] Question: Does the paper discuss the limitations of the work performed by the authors?
    \item[] Answer: \answerYes{} %
    \item[] Justification: Limitations are described in the paper. 
    \item[] Guidelines:
    \begin{itemize}
        \item The answer NA means that the paper has no limitation while the answer No means that the paper has limitations, but those are not discussed in the paper. 
        \item The authors are encouraged to create a separate "Limitations" section in their paper.
        \item The paper should point out any strong assumptions and how robust the results are to violations of these assumptions (e.g., independence assumptions, noiseless settings, model well-specification, asymptotic approximations only holding locally). The authors should reflect on how these assumptions might be violated in practice and what the implications would be.
        \item The authors should reflect on the scope of the claims made, e.g., if the approach was only tested on a few datasets or with a few runs. In general, empirical results often depend on implicit assumptions, which should be articulated.
        \item The authors should reflect on the factors that influence the performance of the approach. For example, a facial recognition algorithm may perform poorly when image resolution is low or images are taken in low lighting. Or a speech-to-text system might not be used reliably to provide closed captions for online lectures because it fails to handle technical jargon.
        \item The authors should discuss the computational efficiency of the proposed algorithms and how they scale with dataset size.
        \item If applicable, the authors should discuss possible limitations of their approach to address problems of privacy and fairness.
        \item While the authors might fear that complete honesty about limitations might be used by reviewers as grounds for rejection, a worse outcome might be that reviewers discover limitations that aren't acknowledged in the paper. The authors should use their best judgment and recognize that individual actions in favor of transparency play an important role in developing norms that preserve the integrity of the community. Reviewers will be specifically instructed to not penalize honesty concerning limitations.
    \end{itemize}

\item {\bf Theory Assumptions and Proofs}
    \item[] Question: For each theoretical result, does the paper provide the full set of assumptions and a complete (and correct) proof?
    \item[] Answer: \answerYes{} %
    \item[] Justification: All the theoretical results and proofs are provided in the Methodology section. (See \autoref{sec:method})
    \item[] Guidelines:
    \begin{itemize}
        \item The answer NA means that the paper does not include theoretical results. 
        \item All the theorems, formulas, and proofs in the paper should be numbered and cross-referenced.
        \item All assumptions should be clearly stated or referenced in the statement of any theorems.
        \item The proofs can either appear in the main paper or the supplemental material, but if they appear in the supplemental material, the authors are encouraged to provide a short proof sketch to provide intuition. 
        \item Inversely, any informal proof provided in the core of the paper should be complemented by formal proofs provided in appendix or supplemental material.
        \item Theorems and Lemmas that the proof relies upon should be properly referenced. 
    \end{itemize}

    \item {\bf Experimental Result Reproducibility}
    \item[] Question: Does the paper fully disclose all the information needed to reproduce the main experimental results of the paper to the extent that it affects the main claims and/or conclusions of the paper (regardless of whether the code and data are provided or not)?
    \item[] Answer: \answerYes{} %
    \item[] Justification: The experimental setup is based on previous papers. All the self-used parameters are discussed in the paper. 
    \item[] Guidelines:
    \begin{itemize}
        \item The answer NA means that the paper does not include experiments.
        \item If the paper includes experiments, a No answer to this question will not be perceived well by the reviewers: Making the paper reproducible is important, regardless of whether the code and data are provided or not.
        \item If the contribution is a dataset and/or model, the authors should describe the steps taken to make their results reproducible or verifiable. 
        \item Depending on the contribution, reproducibility can be accomplished in various ways. For example, if the contribution is a novel architecture, describing the architecture fully might suffice, or if the contribution is a specific model and empirical evaluation, it may be necessary to either make it possible for others to replicate the model with the same dataset, or provide access to the model. In general. releasing code and data is often one good way to accomplish this, but reproducibility can also be provided via detailed instructions for how to replicate the results, access to a hosted model (e.g., in the case of a large language model), releasing of a model checkpoint, or other means that are appropriate to the research performed.
        \item While NeurIPS does not require releasing code, the conference does require all submissions to provide some reasonable avenue for reproducibility, which may depend on the nature of the contribution. For example
        \begin{enumerate}
            \item If the contribution is primarily a new algorithm, the paper should make it clear how to reproduce that algorithm.
            \item If the contribution is primarily a new model architecture, the paper should describe the architecture clearly and fully.
            \item If the contribution is a new model (e.g., a large language model), then there should either be a way to access this model for reproducing the results or a way to reproduce the model (e.g., with an open-source dataset or instructions for how to construct the dataset).
            \item We recognize that reproducibility may be tricky in some cases, in which case authors are welcome to describe the particular way they provide for reproducibility. In the case of closed-source models, it may be that access to the model is limited in some way (e.g., to registered users), but it should be possible for other researchers to have some path to reproducing or verifying the results.
        \end{enumerate}
    \end{itemize}

\item {\bf Open access to data and code}
    \item[] Question: Does the paper provide open access to the data and code, with sufficient instructions to faithfully reproduce the main experimental results, as described in supplemental material?
    \item[] Answer: \answerYes{} %
    \item[] Justification: All the code is provided with the supplemental material. Data is publicly available and instruction is provided on how to get the data. 
    \item[] Guidelines:
    \begin{itemize}
        \item The answer NA means that paper does not include experiments requiring code.
        \item Please see the NeurIPS code and data submission guidelines (\url{https://nips.cc/public/guides/CodeSubmissionPolicy}) for more details.
        \item While we encourage the release of code and data, we understand that this might not be possible, so “No” is an acceptable answer. Papers cannot be rejected simply for not including code, unless this is central to the contribution (e.g., for a new open-source benchmark).
        \item The instructions should contain the exact command and environment needed to run to reproduce the results. See the NeurIPS code and data submission guidelines (\url{https://nips.cc/public/guides/CodeSubmissionPolicy}) for more details.
        \item The authors should provide instructions on data access and preparation, including how to access the raw data, preprocessed data, intermediate data, and generated data, etc.
        \item The authors should provide scripts to reproduce all experimental results for the new proposed method and baselines. If only a subset of experiments are reproducible, they should state which ones are omitted from the script and why.
        \item At submission time, to preserve anonymity, the authors should release anonymized versions (if applicable).
        \item Providing as much information as possible in supplemental material (appended to the paper) is recommended, but including URLs to data and code is permitted.
    \end{itemize}

\item {\bf Experimental Setting/Details}
    \item[] Question: Does the paper specify all the training and test details (e.g., data splits, hyperparameters, how they were chosen, type of optimizer, etc.) necessary to understand the results?
    \item[] Answer: \answerYes{} %
    \item[] Justification: Experimental details and their sources are cited in the Empirical results section. (see \autoref{sec:results})
    \item[] Guidelines:
    \begin{itemize}
        \item The answer NA means that the paper does not include experiments.
        \item The experimental setting should be presented in the core of the paper to a level of detail that is necessary to appreciate the results and make sense of them.
        \item The full details can be provided either with the code, in appendix, or as supplemental material.
    \end{itemize}

\item {\bf Experiment Statistical Significance}
    \item[] Question: Does the paper report error bars suitably and correctly defined or other appropriate information about the statistical significance of the experiments?
    \item[] Answer: \answerYes{} %
    \item[] Justification: See \autoref{fig:accuracy}, \autoref{fig:similarity}
    \item[] Guidelines:
    \begin{itemize}
        \item The answer NA means that the paper does not include experiments.
        \item The authors should answer "Yes" if the results are accompanied by error bars, confidence intervals, or statistical significance tests, at least for the experiments that support the main claims of the paper.
        \item The factors of variability that the error bars are capturing should be clearly stated (for example, train/test split, initialization, random drawing of some parameter, or overall run with given experimental conditions).
        \item The method for calculating the error bars should be explained (closed form formula, call to a library function, bootstrap, etc.)
        \item The assumptions made should be given (e.g., Normally distributed errors).
        \item It should be clear whether the error bar is the standard deviation or the standard error of the mean.
        \item It is OK to report 1-sigma error bars, but one should state it. The authors should preferably report a 2-sigma error bar than state that they have a 96\% CI, if the hypothesis of Normality of errors is not verified.
        \item For asymmetric distributions, the authors should be careful not to show in tables or figures symmetric error bars that would yield results that are out of range (e.g. negative error rates).
        \item If error bars are reported in tables or plots, The authors should explain in the text how they were calculated and reference the corresponding figures or tables in the text.
    \end{itemize}

\item {\bf Experiments Compute Resources}
    \item[] Question: For each experiment, does the paper provide sufficient information on the computer resources (type of compute workers, memory, time of execution) needed to reproduce the experiments?
    \item[] Answer: \answerYes{} %
    \item[] Justification: See \autoref{sec:results}. 
    \item[] Guidelines: 
    \begin{itemize}
        \item The answer NA means that the paper does not include experiments.
        \item The paper should indicate the type of compute workers CPU or GPU, internal cluster, or cloud provider, including relevant memory and storage.
        \item The paper should provide the amount of compute required for each of the individual experimental runs as well as estimate the total compute. 
        \item The paper should disclose whether the full research project required more compute than the experiments reported in the paper (e.g., preliminary or failed experiments that didn't make it into the paper). 
    \end{itemize}
    
\item {\bf Code Of Ethics}
    \item[] Question: Does the research conducted in the paper conform, in every respect, with the NeurIPS Code of Ethics \url{https://neurips.cc/public/EthicsGuidelines}?
    \item[] Answer: \answerYes{} %
    \item[] Justification: We followed the NeurIPS code of conduct. 
    \item[] Guidelines:
    \begin{itemize}
        \item The answer NA means that the authors have not reviewed the NeurIPS Code of Ethics.
        \item If the authors answer No, they should explain the special circumstances that require a deviation from the Code of Ethics.
        \item The authors should make sure to preserve anonymity (e.g., if there is a special consideration due to laws or regulations in their jurisdiction).
    \end{itemize}

\item {\bf Broader Impacts}
    \item[] Question: Does the paper discuss both potential positive societal impacts and negative societal impacts of the work performed?
    \item[] Answer: \answerNA{} %
    \item[] Justification: The work has no negative societal impacts. 
    \item[] Guidelines:
    \begin{itemize}
        \item The answer NA means that there is no societal impact of the work performed.
        \item If the authors answer NA or No, they should explain why their work has no societal impact or why the paper does not address societal impact.
        \item Examples of negative societal impacts include potential malicious or unintended uses (e.g., disinformation, generating fake profiles, surveillance), fairness considerations (e.g., deployment of technologies that could make decisions that unfairly impact specific groups), privacy considerations, and security considerations.
        \item The conference expects that many papers will be foundational research and not tied to particular applications, let alone deployments. However, if there is a direct path to any negative applications, the authors should point it out. For example, it is legitimate to point out that an improvement in the quality of generative models could be used to generate deepfakes for disinformation. On the other hand, it is not needed to point out that a generic algorithm for optimizing neural networks could enable people to train models that generate Deepfakes faster.
        \item The authors should consider possible harms that could arise when the technology is being used as intended and functioning correctly, harms that could arise when the technology is being used as intended but gives incorrect results, and harms following from (intentional or unintentional) misuse of the technology.
        \item If there are negative societal impacts, the authors could also discuss possible mitigation strategies (e.g., gated release of models, providing defenses in addition to attacks, mechanisms for monitoring misuse, mechanisms to monitor how a system learns from feedback over time, improving the efficiency and accessibility of ML).
    \end{itemize}
    
\item {\bf Safeguards}
    \item[] Question: Does the paper describe safeguards that have been put in place for responsible release of data or models that have a high risk for misuse (e.g., pretrained language models, image generators, or scraped datasets)?
    \item[] Answer: \answerNA{} %
    \item[] Justification: No high-risk data or models are used. 
    \item[] Guidelines:
    \begin{itemize}
        \item The answer NA means that the paper poses no such risks.
        \item Released models that have a high risk for misuse or dual-use should be released with necessary safeguards to allow for controlled use of the model, for example by requiring that users adhere to usage guidelines or restrictions to access the model or implementing safety filters. 
        \item Datasets that have been scraped from the Internet could pose safety risks. The authors should describe how they avoided releasing unsafe images.
        \item We recognize that providing effective safeguards is challenging, and many papers do not require this, but we encourage authors to take this into account and make a best faith effort.
    \end{itemize}

\item {\bf Licenses for existing assets}
    \item[] Question: Are the creators or original owners of assets (e.g., code, data, models), used in the paper, properly credited and are the license and terms of use explicitly mentioned and properly respected?
    \item[] Answer: \answerYes{} %
    \item[] Justification: Original paper and code are cited in the paper. 
    \item[] Guidelines:
    \begin{itemize}
        \item The answer NA means that the paper does not use existing assets.
        \item The authors should cite the original paper that produced the code package or dataset.
        \item The authors should state which version of the asset is used and, if possible, include a URL.
        \item The name of the license (e.g., CC-BY 4.0) should be included for each asset.
        \item For scraped data from a particular source (e.g., website), the copyright and terms of service of that source should be provided.
        \item If assets are released, the license, copyright information, and terms of use in the package should be provided. For popular datasets, \url{paperswithcode.com/datasets} has curated licenses for some datasets. Their licensing guide can help determine the license of a dataset.
        \item For existing datasets that are re-packaged, both the original license and the license of the derived asset (if it has changed) should be provided.
        \item If this information is not available online, the authors are encouraged to reach out to the asset's creators.
    \end{itemize}

\item {\bf New Assets}
    \item[] Question: Are new assets introduced in the paper well documented and is the documentation provided alongside the assets?
    \item[] Answer: \answerYes{} %
    \item[] Justification: Documentation of the code is provided.  
    \item[] Guidelines:
    \begin{itemize}
        \item The answer NA means that the paper does not release new assets.
        \item Researchers should communicate the details of the dataset/code/model as part of their submissions via structured templates. This includes details about training, license, limitations, etc. 
        \item The paper should discuss whether and how consent was obtained from people whose asset is used.
        \item At submission time, remember to anonymize your assets (if applicable). You can either create an anonymized URL or include an anonymized zip file.
    \end{itemize}

\item {\bf Crowdsourcing and Research with Human Subjects}
    \item[] Question: For crowdsourcing experiments and research with human subjects, does the paper include the full text of instructions given to participants and screenshots, if applicable, as well as details about compensation (if any)? 
    \item[] Answer: \answerNA{} %
    \item[] Justification: No Crowdsourcing and Research with Human Subjects are used. 
    \item[] Guidelines:
    \begin{itemize}
        \item The answer NA means that the paper does not involve crowdsourcing nor research with human subjects.
        \item Including this information in the supplemental material is fine, but if the main contribution of the paper involves human subjects, then as much detail as possible should be included in the main paper. 
        \item According to the NeurIPS Code of Ethics, workers involved in data collection, curation, or other labor should be paid at least the minimum wage in the country of the data collector. 
    \end{itemize}

\item {\bf Institutional Review Board (IRB) Approvals or Equivalent for Research with Human Subjects}
    \item[] Question: Does the paper describe potential risks incurred by study participants, whether such risks were disclosed to the subjects, and whether Institutional Review Board (IRB) approvals (or an equivalent approval/review based on the requirements of your country or institution) were obtained?
    \item[] Answer: \answerNA{} %
    \item[] Justification: No participants are used. 
    \item[] Guidelines:
    \begin{itemize}
        \item The answer NA means that the paper does not involve crowdsourcing nor research with human subjects.
        \item Depending on the country in which research is conducted, IRB approval (or equivalent) may be required for any human subjects research. If you obtained IRB approval, you should clearly state this in the paper. 
        \item We recognize that the procedures for this may vary significantly between institutions and locations, and we expect authors to adhere to the NeurIPS Code of Ethics and the guidelines for their institution. 
        \item For initial submissions, do not include any information that would break anonymity (if applicable), such as the institution conducting the review.
    \end{itemize}

\end{enumerate}

\end{document}